\def\eqref#1{equation~\ref{#1}}
\def\1{\bm{1}}
\DeclareMathAlphabet{\mathsfit}{\encodingdefault}{\sfdefault}{m}{sl}
\SetMathAlphabet{\mathsfit}{bold}{\encodingdefault}{\sfdefault}{bx}{n}
\newcommand{\pdata}{p_{\rm{data}}}
\newcommand{\punbias}{p_{\rm{ref}}}
\newcommand{\pbias}{p_{\rm{bias}}}
\newcommand{\Dbias}{{\mathcal{D}_{\mathrm{bias}}}}
\newcommand{\Dunbias}{{\mathcal{D}_{\mathrm{ref}}}}
\newcommand{\cX}{{\mathcal{X}}}
\newcommand{\cD}{{\mathcal{D}}}
\newcommand{\cL}{{\mathcal{L}}}
\newcommand{\bbR}{{\mathbb{R}}}
\newcommand{\bbE}{{\mathbb{E}}}
\newcommand{\bx}{{\mathbf{x}}}
\newcommand{\bu}{{\mathbf{u}}}
\newcommand{\bz}{{\mathbf{z}}}
\newtheorem{theorem}{Theorem}
\icmltitlerunning{Fair Generative Modeling via Weak Supervision}
\begin{document}

\twocolumn[
\icmltitle{Fair Generative Modeling via Weak Supervision}

\icmlsetsymbol{equal}{*}

\begin{icmlauthorlist}
\icmlauthor{Kristy Choi}{stan,equal}
\icmlauthor{Aditya Grover}{stan,equal}
\icmlauthor{Trisha Singh}{stanstats}
\icmlauthor{Rui Shu}{stan}
\icmlauthor{Stefano Ermon}{stan}
\end{icmlauthorlist}

\icmlaffiliation{stan}{Department of Computer Science, Stanford University}
\icmlaffiliation{stanstats}{Department of Statistics, Stanford University}

\icmlcorrespondingauthor{Kristy Choi}{kechoi@cs.stanford.edu}

\icmlkeywords{Machine Learning, ICML}

\vskip 0.3in
]

\printAffiliationsAndNotice{\icmlEqualContribution} %

\begin{abstract}
Real-world datasets are often biased with respect to key demographic factors such as race and gender.
Due to the latent nature of the underlying factors, detecting and mitigating bias is especially challenging for unsupervised machine learning.
We present a weakly supervised algorithm for overcoming dataset bias for deep generative models.
Our approach requires access to an additional small, unlabeled 
reference dataset as the supervision signal, thus sidestepping the need for explicit labels on the underlying bias factors. 
Using this supplementary dataset, we detect the bias in existing datasets via a density ratio technique and learn generative models which efficiently achieve the twin goals of: 1) data efficiency by using training examples from both biased and reference datasets for learning; and
2) data generation close in distribution to the reference dataset at test time.
Empirically, we demonstrate the efficacy of our approach which reduces bias w.r.t. latent factors by an average of up to 34.6\% over baselines for comparable image generation using generative adversarial networks.
\end{abstract}

\section{Introduction}
Increasingly, many applications of machine learning (ML) involve \textit{data generation}. 
Examples of such production level systems include Transformer-based models such as BERT and GPT-3 for natural language generation~\citep{vaswani2017attention,devlin2018bert,radford2019language,brown2020language}, Wavenet for text-to-speech synthesis~\citep{oord2017parallel}, and a large number of creative applications such Coconet used for designing the ``first AI-powered Google Doodle''~\citep{huang2017counterpoint}.
As these generative applications 
become more prevalent, it becomes increasingly important to consider
questions with regards to the potential discriminatory nature of such systems and ways to mitigate it~\citep{podesta2014big}. For example, some natural language generation systems trained on internet-scale datasets have been shown to produce generations that are biased towards certain demographics~\citep{sheng2019woman}.

A variety of socio-technical factors contribute to the discriminatory nature of ML systems~\citep{barocas-hardt-narayanan}.
A major factor is the existence of biases in the training data itself~\citep{torralba2011unbiased, tommasi2017deeper}.
Since data is the fuel of ML, any existing bias in the dataset can be propagated to the learned model~\citep{barocas2016big}.
This is a particularly pressing concern for generative models which can easily amplify the bias by generating more of the biased data at test time.
Further, learning a generative model is fundamentally an unsupervised learning problem and hence, the bias factors of interest are typically latent.
For example, while learning a generative model of human faces, we often do not have access to attributes such as gender, race, and age.
Any existing bias in the dataset with respect to these attributes are easily picked by deep generative models. See Figure~\ref{fig:baseline} for an illustration.

\begin{figure*}[ht]
    \centering
    \includegraphics[width=.9\linewidth]{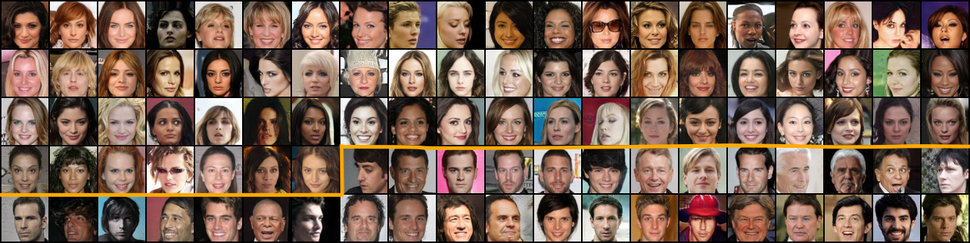}
    \caption{Samples from a baseline BigGAN that reflect the gender bias underlying the true data distribution in CelebA. All faces above the orange line (67\%) are classified as female, while the rest are labeled as male (33\%).}
    \label{fig:baseline}
\end{figure*}

In this work, we present a weakly-supervised approach to learning \textit{fair} generative models in the presence of dataset bias.
Our source of weak supervision is motivated by the observation that obtaining multiple unlabelled (biased) datasets is relatively cheap for many domains in the big data era. Among these data sources, we may wish to generate samples that are close in distribution to a particular target (reference) dataset.\footnote{We note that while there may not be concept of a dataset devoid of bias, carefully designed representative data collection practices may be more accurately reflected in some data sources~\citep{gebru2018datasheets} and can be considered as \textit{reference} datasets.}
As a concrete example of such a reference, organizations such as the World Bank and biotech firms~\citep{23andme,hong76587923andme} typically follow several good practices to ensure representativeness in the datasets that they collect, though such methods are unscalable to large sizes.
We note that neither of our datasets need to be labeled w.r.t. the latent bias attributes and the size of the reference dataset can be much smaller than the biased dataset.
Hence, the level of supervision we require is weak.

Using a reference dataset to augment a  biased dataset, our goal is to learn a generative model that best approximates the desired, reference data distribution.
Simply using the reference dataset alone for learning is an option, but this may not suffice since this dataset can be too small to learn an expressive model that accurately captures the underlying reference data distribution.
Our approach to learning a \textit{fair} generative model that is robust to biases in the larger training set is based on importance reweighting.
In particular, we learn a generative model which reweighs the data points in the biased dataset based on the ratio of densities assigned by the biased data distribution as compared to the reference data distribution.
Since we do not have access to explicit densities assigned by either of the two distributions, we estimate the weights by using a probabilistic classifier~\citep{sugiyama2012density,mohamed2016learning}.

We test our weakly-supervised approach on learning generative adversarial networks on the CelebA dataset~\citep{liu2015faceattributes}.
The dataset consists of attributes such as gender and hair color, which we use for designing biased and reference data splits and subsequent evaluation. We empirically demonstrate how the reweighting approach can offset dataset bias on a wide range of settings.
In particular, we obtain improvements of up to 36.6\% (49.3\% for \texttt{bias}=0.9 and 23.9\% for \texttt{bias}=0.8) for single-attribute dataset bias and 32.5\% for multi-attribute dataset bias on average over baselines in reducing the bias with respect to the latent factors for comparable sample quality.

\section{Problem Setup}
\subsection{Background}
We assume there exists a true (unknown) data distribution $\pdata: \cX \to  \bbR_{\ge 0}$ over a set of $d$ observed variables $\bx \in \bbR^d$.
In generative modeling, our goal is to learn the parameters $\theta\in\Theta$ of a distribution $p_\theta: \cX \to \bbR_{\ge 0}$ over the observed variables $\bx$, such that the model distribution $p_\theta$ is close to $\pdata$.
Depending on the choice of learning algorithm, different approaches have been previously considered.
Broadly, these include adversarial training e.g., GANs~\citep{goodfellow2014generative} and maximum likelihood estimation (MLE) e.g., variational autoencoders (VAE)~\citep{kingma2013auto,rezende2014stochastic} and normalizing flows~\citep{dinh2014nice} or hybrids~\citep{grover2018flow}.
Our bias mitigation framework is agnostic to the above training approaches.

For generality, we consider expectation-based learning objectives, where $\ell(\cdot)$ is a per-example loss that depends on both examples $\bx$ drawn from a dataset $\cD$ and the model parameters $\theta$:
\begin{align}\label{eq:gen_model_learn}
    \bbE_{\bx \sim \pdata}[\ell(\bx, \theta)] \approx \frac{1}{T} \sum_{i=1}^T \ell(\bx_i, \theta):= \cL(\theta; \cD) 
\end{align}

The above expression encompasses a broad class of MLE and adversarial objectives. For example, if $\ell(\cdot)$ denotes the negative log-likelihood assigned to the point $\bx$ as per $p_\theta$, then we recover the MLE training objective. 

\subsection{Dataset Bias} 
The standard assumption for learning a generative model is that we have access to a sufficiently large dataset $\Dunbias$ of training examples, where each $\bx\in \Dunbias$ is assumed to be sampled independently from a reference distribution $\pdata=\punbias$.
In practice however, collecting large datasets that are i.i.d. w.r.t. $\punbias$ is difficult due to a variety of socio-technical factors. The sample complexity for learning high dimensional distributions can even be doubly-exponential in the dimensions in many cases~\citep{arora2018gans}, surpassing the size of the largest available datasets.

We can partially offset this difficulty by considering data from alternate sources related to the target distribution, e.g., images scraped from the Internet.
However, these additional datapoints are not expected to be i.i.d. w.r.t. $\punbias$.

We characterize this phenomena as \textit{dataset bias}, where we assume the availability of a dataset $\Dbias$, such that the examples $\bx\in \Dbias$ are sampled independently from a biased (unknown) distribution $\pbias$ that is different from $\punbias$, but shares the same support.

\subsection{Evaluation}\label{sec:evaluation_metrics}
Evaluating generative models and fairness in machine learning are both open areas of research. Our work is at the intersection of these two fields and we propose the following metrics for measuring bias mitigation for data generation.
\paragraph{Sample Quality:} We employ sample quality metrics e.g., Frechet Inception Distance (FID)~\citep{heusel2017gans}, Kernel Inception Distance (KID)~\citep{li2017mmd}, etc. These metrics match empirical expectations w.r.t. a reference data distribution $p$ and a model distribution $p_\theta$ in a predefined feature space e.g., the prefinal layer of activations of Inception Network~\citep{szegedy2016rethinking}.
A lower score indicates that the learned model can better approximate $\pdata$.
For the fairness context in particular, we are interested in measuring the discrepancy w.r.t. $\punbias$  even if the model has been trained to use both $\Dunbias$ and $\Dbias$. We refer the reader to Supplement~\ref{sec:unbiased_fid} for more details on evaluation with FID.
\paragraph{Fairness:} Alternatively, we can evaluate bias of generative models specifically in the context of some sensitive latent variables, say $\bu \in \bbR^k$. For example, $\bu$ may correspond to the age and gender of an individual depicted via an image $\bx$. We emphasize that such attributes are \textit{unknown} during training, and used only for evaluation at test time.
    
If we have access to a highly accurate predictor $p(\bu \vert \bx)$ for the distribution of the sensitive attributes $\bu$ conditioned on the observed $\bx$, we can evaluate the extent of bias mitigation via the discrepancies in the expected marginal likelihoods of $\bu$ as per $\punbias$ and $p_\theta$. 
    
    Formally, we define the fairness discrepancy $f$ for a generative model $p_\theta$ w.r.t. $\punbias$ and sensitive attributes $\bu$:
    \begin{align}\label{eq:fairness_discrepancy}
        f(\punbias, p_\theta) = \vert \bbE_{\punbias}[p(\bu \vert \bx)] - \bbE_{p_\theta}[p(\bu \vert \bx)] \vert_2.
    \end{align}
    In practice, the expectations in Eq.~\eqref{eq:fairness_discrepancy} can be computed via Monte Carlo averaging. Again the lower is the discrepancy in the above two expectations, the better is the learned model's ability to mitigate dataset bias w.r.t. the sensitive attributes $\bu$. We refer the reader to Supplement~\ref{sec:fairdisc_metric} for more details on the fairness discrepancy metric.

\section{Bias Mitigation}
\label{sec:iw}

We assume a learning setting where we are given access to a data source $\Dbias$ in addition to a dataset of training examples $\Dunbias$.
Our goal is to capitalize on both data sources $\Dbias$ and $\Dunbias$ for learning a model $p_\theta$ that best approximates the target distribution $\punbias$. 

\subsection{Baselines}\label{sec:baselines}
We begin by discussing two baseline approaches at the extreme ends of the spectrum.
First, one could completely ignore $\Dbias$ and consider learning $p_\theta$ based on $\Dunbias$ alone. Since we only consider proper losses w.r.t. $\punbias$, global optimization of the objective in Eq.~\eqref{eq:gen_model_learn} in a well-specified model family will recover the true data distribution as $\vert\Dunbias\vert \to \infty$. However, since $\Dunbias$ is finite in practice, this is likely to give poor sample quality even though the fairness discrepancy would be low.

On the other extreme, we can consider learning $p_\theta$ based on the full dataset consisting of both $\Dunbias$ and $\Dbias$.
This procedure will be data efficient and could lead to high sample quality, but it comes at the cost of fairness since the learned distribution will be heavily biased w.r.t. $\punbias$.

\subsection{Solution 1: Conditional Modeling}

Our first proposal 
is to learn a generative model \textit{conditioned} on the identity of the dataset used during training.
Formally, we learn a generative model $p_\theta (\bx \vert y)$
where $y\in\{0, 1\}$ is a binary random variable indicating whether the model distribution was learned to approximate the data distribution corresponding to $\Dunbias$ (i.e., $\punbias$) or $\Dbias$ (i.e., $\pbias$).
By sharing model parameters $\theta$ across the two values of $y$, we hope to leverage both data sources.
At test time, conditioning on $y$ for $\Dunbias$ should result in fair generations.

As we demonstrate in Section~\ref{sec:exp} however, this simple approach does not achieve the intended effect in practice.
The likely cause is that the conditioning information is too weak for the model to infer the bias factors and effectively distinguish between the two distributions.
Next, we present an alternate two-phased approach based on density ratio estimation which effectively overcomes the dataset bias in a data-efficient manner.

\subsection{ Solution 2: Importance Reweighting}

Recall a trivial baseline in Section~\ref{sec:baselines} which learns a generative model on the union of $\Dbias$ and $\Dunbias$. This method is problematic because it assigns equal weight to the loss contributions from each individual datapoint in our dataset in Eq.~\eqref{eq:gen_model_learn}, regardless of whether the datapoint comes from $\Dbias$ or $\Dunbias$.
For example, in situations where the dataset bias causes a minority group to be underrepresented, this objective will encourage the model to focus on the majority group such that the overall value of the loss is minimized on average with respect to a biased empirical distribution i.e., a weighted mixture of $\pbias$ and $\punbias$ with weights proportional to $\vert \Dbias \vert$ and $\vert\Dunbias\vert$.

Our key idea is to reweight the datapoints from $\Dbias$ during training such that the model learns to downweight over-represented data points from $\Dbias$ while simultaneously upweighting the under-represented points from $\Dunbias$.
The challenge in the unsupervised context is that we do not have direct supervision on which points are over- or under-represented and by how much.
To resolve this issue, we consider importance sampling~\citep{HorvitzTh52}.
Whenever we are given data from two distributions, w.l.o.g. say $p$ and $q$, and wish to evaluate a sample average w.r.t. $p$ given samples from $q$, we can do so by reweighting the samples from $p$ by the ratio of densities assigned to the sampled points by $p$ and $q$. 
In our setting, the distributions of interest are $\pbias$ and $\punbias$ respectively. 
Hence, an importance weighted objective for learning 
from 
$\Dbias$ is:
\begin{align}
     \bbE_{\bx \sim \punbias}[\ell(\bx, \theta)] &= \bbE_{\bx \sim \pbias}\left[\frac{\punbias(\bx)}{\pbias(\bx)}\ell(\bx, \theta)\right]\\
    &\approx \frac{1}{T} \sum_{i=1}^T w(\bx_i)\ell(\bx_i, \theta):=\cL(\theta, \Dbias)
\end{align}
where $w(\bx_i):=\frac{\punbias(\bx)}{\pbias(\bx)} $ is defined to be the importance weight for $\bx_i\sim \pbias$.

\paragraph{Estimating density ratios via binary classification.} 
To estimate the importance weights, we use a binary classifier as described below~\citep{sugiyama2012density}.

Consider a binary classification problem with classes $Y \in \{0,1\}$ with training data generated as follows. 
First, we fix a prior probability for $p(Y=1)$. 
Then, we repeatedly sample $y \sim p(Y)$. 
If $y=1$, we independently sample a datapoint $\bx \sim \punbias$, else we sample $\bx \sim \pbias$.
Then, as shown in~\citet{friedman2001elements}, the ratio of densities $\punbias$ and $\pbias$ assigned to an arbitrary point $\bx$ can be recovered via a Bayes optimal (probabilistic) classifier $c^\ast: \cX \to [0,1]$:
\begin{align}\label{eq:imp_wts}
    w(\bx) = \frac{\punbias(\bx)}{\pbias(\bx)} = \gamma \frac{c^\ast(Y=1 \vert x)}{1-c^\ast(Y=1 \vert x)}
\end{align}
where $c(Y=1 \vert x)$ is the probability assigned by the classifier to the point $\bx$ belonging to class $Y=1$. Here, $\gamma=\frac{p(Y=0)}{p(Y=1)}$
is the ratio of marginals of the labels for two classes. 

In practice, we do not have direct access to either $\pbias$ or $\punbias$ and hence, our training data consists of points sampled from the empirical data distributions defined uniformly over $\Dunbias$ and $\Dbias$.
Further, we may not be able to learn a Bayes optimal classifier and denote the importance weights estimated by the learned classifier $c$ for a point $\bx$ as $\hat{w}(\bx)$.

\begin{algorithm}[t]
  \caption{Learning Fair Generative Models}
  \label{alg:fair_gen}
  \textbf{Input:} $\Dbias, \Dunbias$, Classifier and Generative Model Architectures \& Hyperparameters\\
  \textbf{Output:} Generative Model Parameters $\theta$ \\
\begin{algorithmic}[1]
\STATE $\triangleright$ Phase 1: Estimate importance weights
  \STATE \label{line:bc}Learn binary classifier $c$ for distinguishing $(\Dbias,Y=0)$ vs. $(\Dunbias,Y=1)$
  \STATE \label{line:iw_bias}Estimate importance weight $\hat{w}(\bx)\leftarrow \frac{c(Y=1 \vert \bx)}{c(Y=0 \vert \bx)}$ for all $\bx \in \Dbias$ (using Eq.~\ref{eq:imp_wts})
  \STATE\label{line:iw_unbias}Set importance weight $\hat{w}(\bx) \leftarrow 1$ for all $\bx \in \Dunbias$
 \STATE $\triangleright$ Phase 2: Minibatch gradient descent on $\theta$ based on weighted loss
 \STATE \label{line:learn_start}Initialize model parameters $\theta$ at random
 \STATE Set full dataset $\cD \leftarrow  \Dbias\cup\Dunbias$
  \WHILE{training}
  \STATE Sample a batch of points $B$ from $\cD$ at random
  \STATE \label{line:mb_loss} Set loss $\cL(\theta; \cD) \leftarrow \frac{1}{\vert B \vert}\sum_{\bx_i \in B}\hat{w}(\bx_i)\ell(\bx_i, \theta)$
  \STATE Estimate gradients $\nabla_\theta \cL(\theta; \cD)$ and update parameters $\theta$ based on optimizer update rule
  \ENDWHILE \label{line:learn_end}
   
  \STATE {\bfseries return } $\theta$
\end{algorithmic}
\end{algorithm}

Our overall procedure is summarized in Algorithm~\ref{alg:fair_gen}.
We use deep neural networks for parameterizing the binary classifier and the generative model.
Given a biased and reference dataset along with the network architectures and other standard hyperparameters (e.g., learning rate, optimizer etc.), we first learn a probabilistic binary classifier (Line 2). 
The learned classifier can provide importance weights for the datapoints from $\Dbias$ via estimates of the density ratios (Line 3). 
For the datapoints from $\Dunbias$, we do not need to perform any reweighting and set the importance weights to 1 (Line 4). 
Using the combined dataset $\Dbias \cup \Dunbias$, we then learn the generative model $p_\theta$ where the minibatch loss for every gradient update weights the contributions from each datapoint (Lines 6-12).

For a practical implementation, it is best to account for some diagnostics 
and best practices while executing Algorithm~\ref{alg:fair_gen}.
For density ratio estimation, we test that the classifier is calibrated on a held out set.
This is a necessary (but insufficient) check for the estimated density ratios to be meaningful.
If the classifier is miscalibrated, we can apply standard recalibration techniques such as Platt scaling before estimating the importance weights. Furthermore, while optimizing the model using a weighted objective, there can be an increased variance across the loss contributions from each example in a minibatch due to importance weighting.
We did not observe this in our experiments, but techniques such as normalization of weights within a batch can potentially help control the unintended variance introduced within a batch~\citep{sugiyama2012density}.

\begin{figure*}[ht]
\centering
\subfigure[single, \texttt{bias}=0.9]{\includegraphics[width=.32\linewidth]
{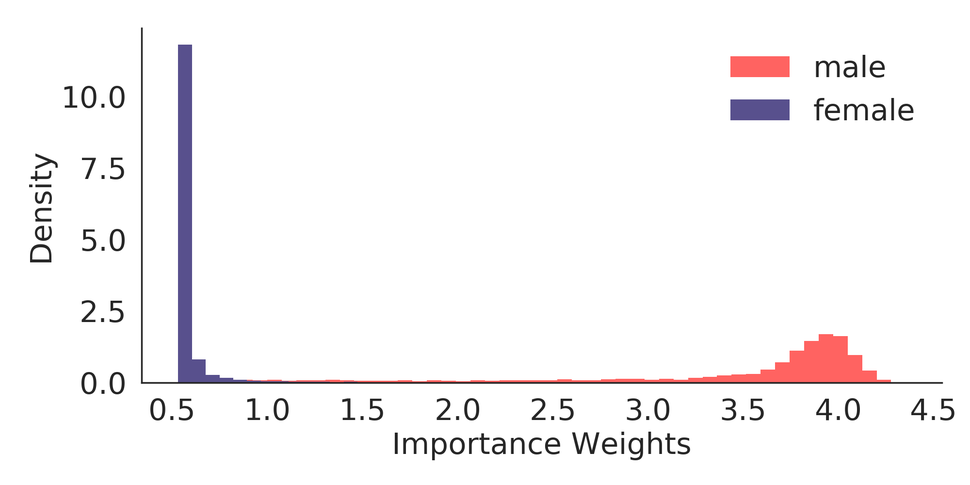}\label{fig:dre_90}}
\subfigure[single, \texttt{bias}=0.8]{\includegraphics[width=.32\linewidth]
{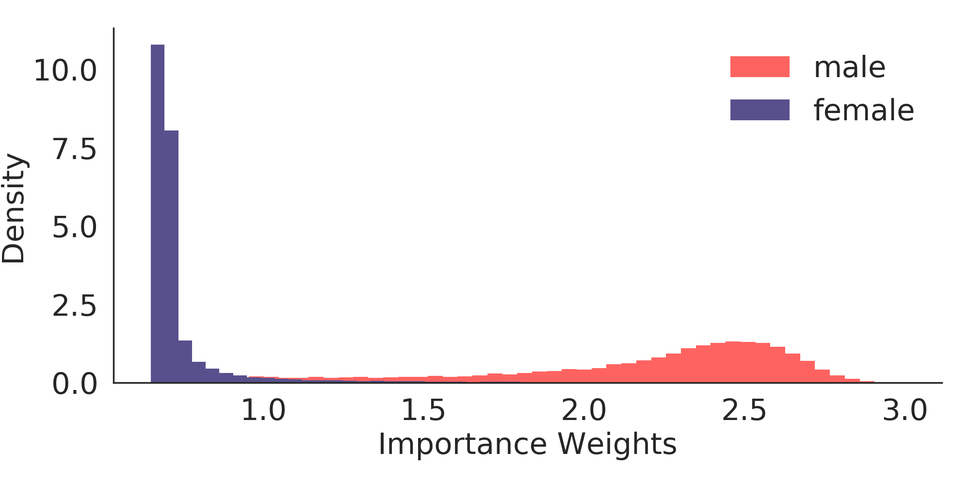}\label{fig:dre_80}}
\subfigure[multi]{\includegraphics[width=.32\linewidth]
{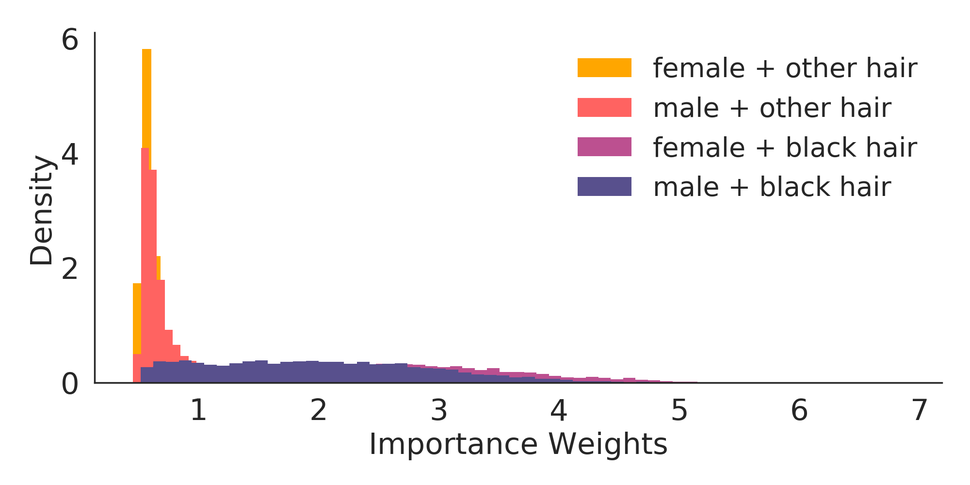}\label{fig:dre_multi}}
\caption{Distribution of importance weights for different latent subgroups. On average, The underrepresented subgroups are upweighted while the overrepresented subgroups are downweighted.}
\label{fig:imp_weights}
\end{figure*}
\textbf{Theoretical Analysis.} The performance of Algorithm~\ref{alg:fair_gen} critically depends on the quality of estimated density ratios, which in turn is dictated by the training of the binary classifier. 
We define the expected negative cross-entropy (NCE) objective for a classifier $c$ as:
\begin{align}
NCE(c) &:= \frac{1}{\gamma+1}\bbE_{\punbias(\bx)} [\log c(Y=1 \vert \bx)] \nonumber \\
&+ \frac{\gamma}{\gamma+1}\bbE_{\pbias(\bx)}[\log c(Y=0 \vert \bx)].
\end{align}

In the following result, we characterize the NCE loss for the Bayes optimal classifier.

\begin{theorem}\label{thm:optimal_nce}
Let $\mathcal{Z}$ denote a set of unobserved bias variables.
Suppose there exist two joint distributions $\pbias(\bx, \bz)$ and $\punbias(\bx, \bz)$ over $\bx \in \mathcal{X}$ and $\bz \in \mathcal{Z}$. Let $\pbias(\bx)$ and $\pbias(\bz)$ denote the marginals over $\bx$ and $\bz$ for the joint $\pbias(\bx, \bz)$ and similar notation for the joint $\punbias(\bx, \bz)$: 
\begin{align}\label{eq:cond_match}
    \pbias(\bx \vert \bz=k) &= \punbias(\bx \vert \bz=k) \; \;\forall k
\end{align}
and 
$\pbias(\bx \vert \bz=k)$, $\pbias(\bx \vert \bz=k')$ have disjoint supports for $k\neq k'$.
Then, the negative cross-entropy of the Bayes optimal classifier $c^\ast$ is given as:
\begin{align}\label{eq:optimal_nce}
NCE(c^\ast) &= \frac{1}{\gamma+1}\bbE_{\punbias(\bz)} \left[\log \frac{1}{\gamma b(\bz)+1}\right] \nonumber\\
&+ \frac{\gamma}{\gamma+1}\bbE_{\pbias(\bz)} \left[\log \frac{\gamma b(\bz)}{\gamma b(\bz)+1}\right].
\end{align}
where $b(\bz) = \pbias(\bz) / \punbias(\bz)$.
\end{theorem}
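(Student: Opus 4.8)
The plan is to compute the Bayes optimal classifier $c^\ast(Y=1\vert\bx)$ explicitly using the density ratio $w(\bx)=\punbias(\bx)/\pbias(\bx)$, then substitute into the $NCE$ objective. First I would recall from Eq.~\eqref{eq:imp_wts} (equivalently, from Bayes' rule with prior $p(Y=1)=1/(\gamma+1)$) that the Bayes optimal classifier satisfies
\begin{align*}
c^\ast(Y=1\vert\bx) &= \frac{\punbias(\bx)}{\punbias(\bx)+\gamma\,\pbias(\bx)}, \qquad
c^\ast(Y=0\vert\bx) = \frac{\gamma\,\pbias(\bx)}{\punbias(\bx)+\gamma\,\pbias(\bx)}.
\end{align*}
The key structural step is to argue that under the assumptions, the density ratio $\punbias(\bx)/\pbias(\bx)$ depends on $\bx$ only through which group $\bz=k$ the point $\bx$ belongs to, and in fact equals $1/b(\bz)$ on that group. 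This follows because the disjoint-support condition means each $\bx$ lies in the support of exactly one $\pbias(\bx\vert\bz=k)$, so we can write $\pbias(\bx)=\sum_k \pbias(\bz=k)\pbias(\bx\vert\bz=k)=\pbias(\bz=k_\bx)\pbias(\bx\vert\bz=k_\bx)$ where $k_\bx$ is that unique group, and similarly for $\punbias$; then by the conditional-matching assumption Eq.~\eqref{eq:cond_match} the conditional factors cancel, giving $\punbias(\bx)/\pbias(\bx)=\punbias(\bz=k_\bx)/\pbias(\bz=k_\bx)=1/b(k_\bx)$.

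Next I would plug this into the classifier expressions: on the support of group $k$,
\begin{align*}
c^\ast(Y=1\vert\bx) = \frac{1}{\gamma b(\bz)+1}, \qquad c^\ast(Y=0\vert\bx) = \frac{\gamma b(\bz)}{\gamma b(\bz)+1},
\end{align*}
which are constant on each group. Substituting into the definition of $NCE(c^\ast)$, the first term becomes $\frac{1}{\gamma+1}\bbE_{\punbias(\bx)}[\log\frac{1}{\gamma b(\bz)+1}]$ and the second $\frac{\gamma}{\gamma+1}\bbE_{\pbias(\bx)}[\log\frac{\gamma b(\bz)}{\gamma b(\bz)+1}]$. Finally, since the integrand in each term depends on $\bx$ only through $\bz$, the tower property (marginalizing out $\bx$ given $\bz$) lets me replace $\bbE_{\punbias(\bx)}$ by $\bbE_{\punbias(\bz)}$ and $\bbE_{\pbias(\bx)}$ by $\bbE_{\pbias(\bz)}$, yielding exactly Eq.~\eqref{eq:optimal_nce}.

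The main obstacle — really the only non-routine point — is justifying rigorously that the density ratio collapses to a function of $\bz$ alone; this is where the disjoint-support hypothesis does the real work, since without it the ratio $\punbias(\bx)/\pbias(\bx)$ would involve a mixture over several $k$ and would not simplify. I would be careful to handle the measure-zero overlap of supports (or state that the supports are disjoint up to a null set) and to note that $b(\bz)$ is well-defined since the two marginals over $\bz$ share support (inherited from the shared support of $\pbias$ and $\punbias$ plus Eq.~\eqref{eq:cond_match}). The remaining steps are elementary algebra with Bayes' rule and a single application of the law of total expectation.
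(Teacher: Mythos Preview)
Your proposal is correct and follows essentially the same route as the paper: introduce the unique group assignment induced by the disjoint-support assumption (the paper writes this as a map $f(\bx)$, you call it $k_\bx$), cancel the conditional densities via Eq.~\eqref{eq:cond_match} to reduce the density ratio to $1/b(\bz)$, plug into the Bayes optimal classifier, and then collapse the expectations over $\bx$ to expectations over $\bz$ via the tower property. Your added remarks about null-set overlaps and well-definedness of $b(\bz)$ are welcome care that the paper's proof leaves implicit.
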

\begin{proof}
See Supplement~\ref{app:proof}.
\end{proof}
For example, as we shall see in our experiments in the following section, the inputs $\bx$ can correspond to face images, whereas the unobserved $\bz$ represents sensitive bias factors for a subgroup such as gender or ethnicity.
The proportion of examples $\bx$ belonging a subgroup can differ across the biased and reference datasets with the relative proportions given by $b(\mathbf{z})$.
Note that the above result only requires knowing these relative proportions and not the true $\bz$ for each $\bx$.
The practical implication is that under the assumptions of Theorem~\ref{thm:optimal_nce}, we can check the quality of density ratios estimated by an arbitrary learned classifier $c$ by comparing its empirical NCE with the theoretical NCE of the Bayes optimal classifier in Eq.~\ref{eq:optimal_nce} (see Section~\ref{sec:dre_analysis}).
\begin{figure*}[t]
\centering
\subfigure[
Samples generated via importance reweighting with subgroups separated by the orange line. For the 100 samples above, the classifier concludes 52 females and 48 males.
]{\includegraphics[width=.9\linewidth]
{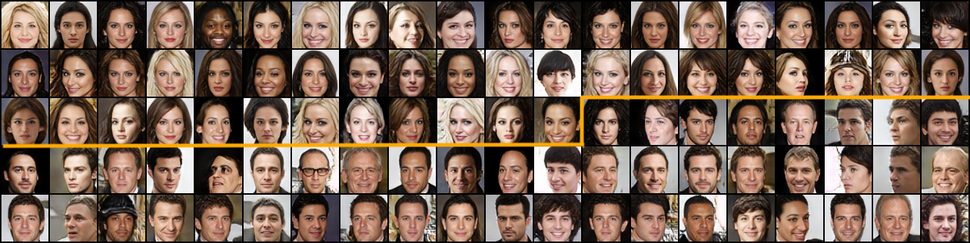}}
\subfigure[Fairness Discrepancy]{\includegraphics[width=.47\linewidth]
{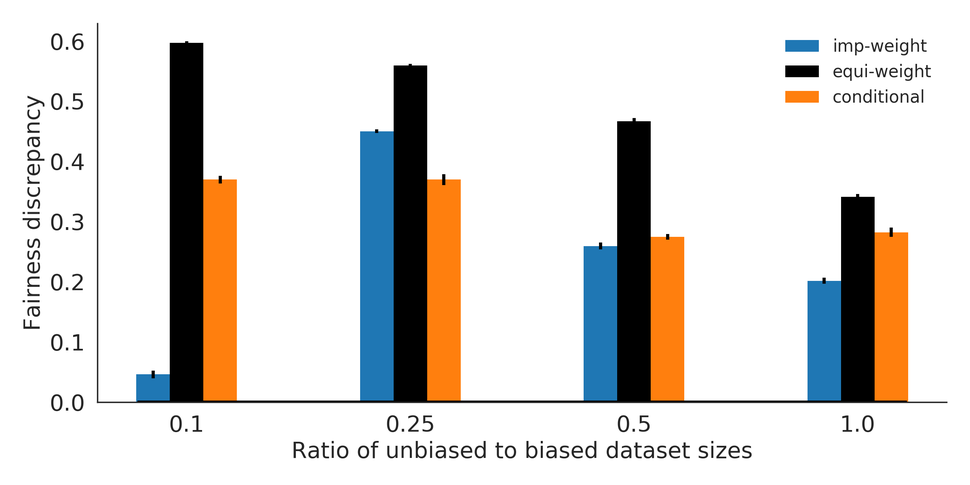}}
\subfigure[FID]{\includegraphics[width=.47\linewidth]
{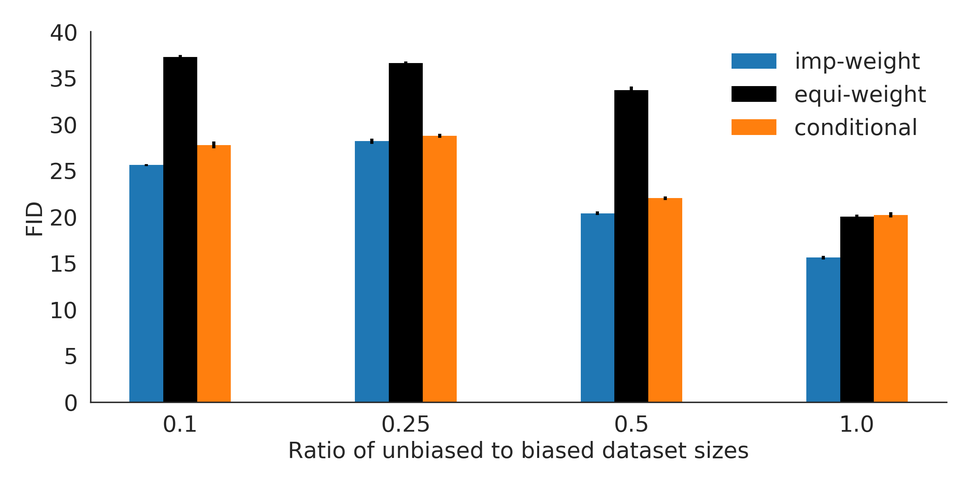}}
\caption{Single-Attribute Dataset Bias Mitigation for \texttt{bias}=0.9. Lower discrepancy and FID is better. Standard error in (b) and (c) over 10 independent evaluation sets of 10,000 samples each drawn from the models. We find that on average, \texttt{imp-weight} outperforms the \texttt{equi-weight} baseline by 49.3\% and the \texttt{conditional} baseline by 25.0\% across all reference dataset sizes for bias mitigation.
}
\label{fig:90_10_results}
\end{figure*}

\section{Empirical Evaluation}\label{sec:exp}
In this section, we are interested in investigating two broad questions empirically:
\begin{enumerate}
    \item How well can we estimate density ratios for the proposed weak supervision setting?
    \item How effective is the reweighting technique for learning fair generative models on the fairness discrepancy metric proposed in Section~\ref{sec:evaluation_metrics}?
\end{enumerate}
We further demonstrate the usefulness of our generated data in downstream applications such as data augmentation for learning a fair classifier in Supplement~\ref{sec:downstream}.

\textbf{Dataset.} We consider the CelebA \citep{liu2015faceattributes} dataset, which is commonly used for benchmarking deep generative models and comprises of images of faces with 40 labeled binary attributes. 
We use this attribute information to construct 3 different settings for partitioning the full dataset into $\Dbias$ and $\Dunbias$.

\begin{itemize}
    \item \textbf{Setting 1 (single, \texttt{bias}=0.9):} We set $\bz$ to be a single bias variable corresponding to ``gender" with values 0 (female) and 1 (male) and $b(\bz=0)=0.9$.
    
    Specifically, this means that $\Dunbias$ contains the same fraction of male and female images whereas $\Dbias$ contains 0.9 fraction of females and rest as males. 
    \item \textbf{Setting 2 (single, \texttt{bias}=0.8):} We use same bias variable (gender) as Setting 1 with $b(\bz=0)=0.8$.
    \item \textbf{Setting 3 (multi):} We set $\bz$ as two bias variables corresponding to ``gender" and ``black hair". In total, we have 4 subgroups: females without black hair (00), females with black hair (01), males without black hair (10), and males with black hair (11). We set $b(\bz=00)=0.437, b(\bz=01)=0.063,b(\bz=10)=0.415, b(\bz=11)=0.085$.
\end{itemize}
We emphasize that the attribute information is used only for designing controlled biased and reference datasets and faithful evaluation. 
Our algorithm does not explicitly require such labeled information. Additional information on constructing the dataset splits can be found in Supplement~\ref{sec:dataset}.

\textbf{Models. }
We train two classifiers for our experiments: (1) the attribute (e.g. gender) classifier which we use to assess the level of bias present in our final samples; and (2) the density ratio classifier. For both models, we use a variant of ResNet18 \citep{he2016deep} on the standard train and validation splits of CelebA. For the generative model, we used a BigGAN \citep{brock2018large} trained to minimize the hinge loss \citep{lim2017geometric,tran2017hierarchical} objective. Additional details regarding the architectural design and hyperparameters in Supplement~\ref{sec:hyperparams}.

\subsection{Density Ratio Estimation via Classifier}\label{sec:dre_analysis}

For each of the three experiments settings, we can evaluate the quality of the estimated density ratios by comparing empirical estimates of the cross-entropy loss of the density ratio classifier with the cross-entropy loss of the Bayes optimal classifier derived in Eq.~\ref{eq:optimal_nce}.
We show the results in Table~\ref{table:dre_analysis} for \texttt{perc}=1.0 where we find that the two losses are very close, suggesting that we obtain high-quality density ratio estimates that we can use for subsequently training fair generative models.
In Supplement~\ref{sec:calib_acc}, we show a more fine-grained analysis of the 0-1 accuracies and calibration of the learned models. 

\begin{table}[ht]
\centering
\begin{tabular}{l|c|c}
\toprule
\textbf{Model} & \textbf{Bayes optimal} & \textbf{Empirical} \\
\midrule
single, \texttt{bias}=0.9 & 0.591 & 0.605\\ 
single, \texttt{bias}=0.8 & 0.642 & 0.650\\ 
multi & 0.619 & 0.654\\ 
\bottomrule
\end{tabular}
\caption{Comparison between the cross-entropy loss of the Bayes classifier and learned density ratio classifier.}
\label{table:dre_analysis}
\end{table}

In Figure~\ref{fig:imp_weights}, we show the distribution of our importance weights for the various latent subgroups. We find that across all the considered settings, the underrepresented subgroups (e.g., males in Figure~\ref{fig:dre_90}, \ref{fig:dre_80}, females with black hair in \ref{fig:dre_multi}) are upweighted on average (mean density ratio estimate $>$ 1), while the overrepresented subgroups are downweighted on average (mean density ratio estimate $<$ 1). Also, as expected, the density ratio estimates are closer to 1 when the bias is low (see Figure~\ref{fig:dre_90} v.s. \ref{fig:dre_80}).

\begin{figure*}[t]
\centering
\subfigure[Samples generated via importance reweighting. For the 100 samples above, the classifier concludes 37 females and 20 males without black hair, 22 females and 21 males with black hair.
]{\includegraphics[width=.9\linewidth]
{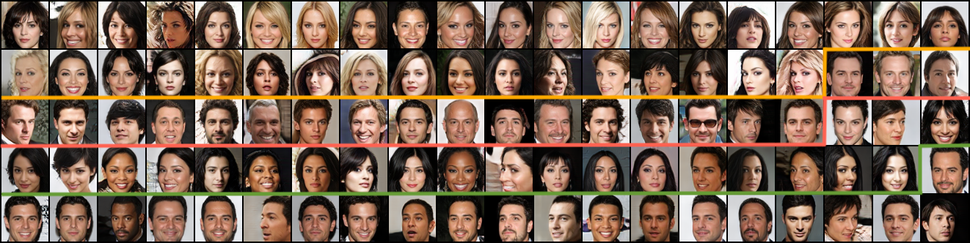}}
\subfigure[Fairness Discrepancy]{\includegraphics[width=.47\textwidth]
{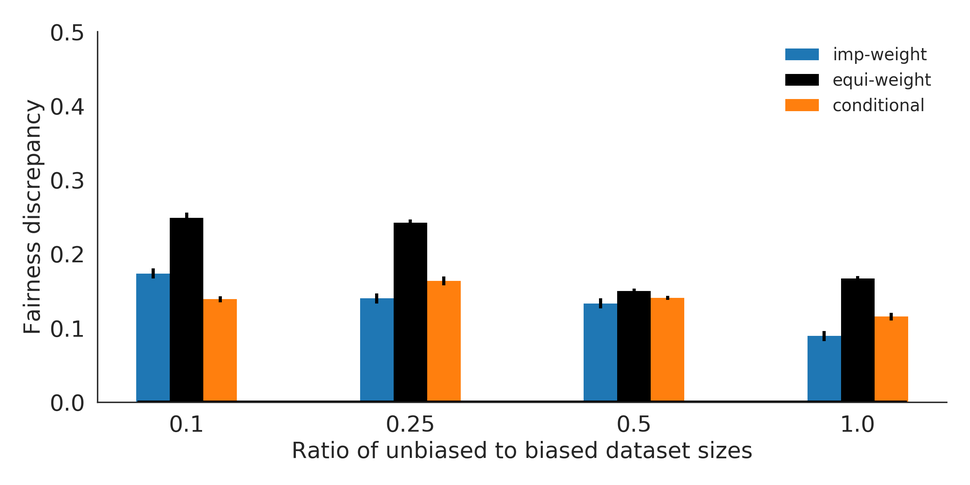}}
\subfigure[FID]{\includegraphics[width=.47\textwidth]
{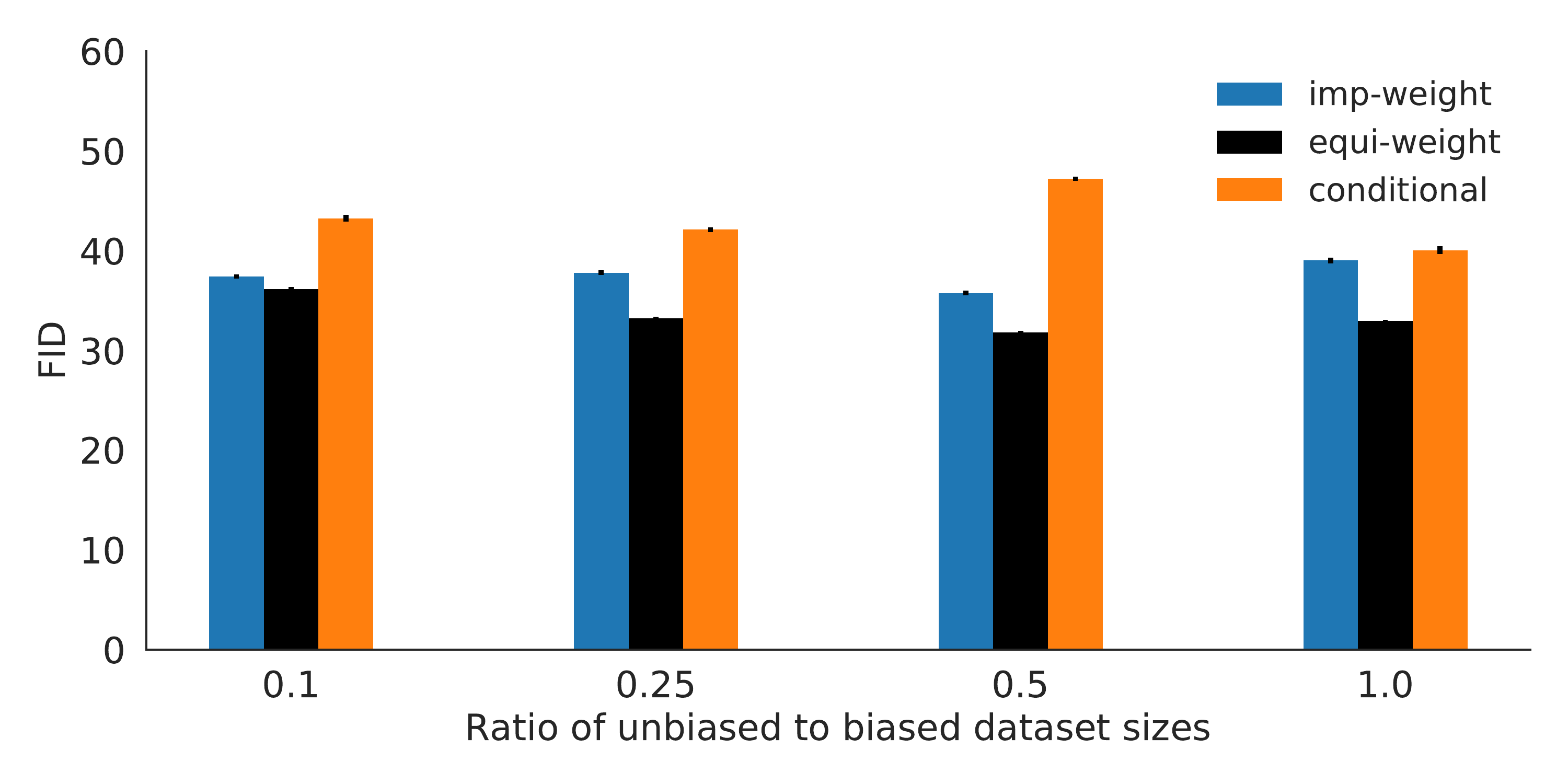}}
\caption{Mult-Attribute Dataset Bias Mitigation. Standard error in (b) and (c) over 10 independent evaluation sets of 10,000 samples each drawn from the models. Lower discrepancy and FID is better. We find that on average, \texttt{imp-weight} outperforms the \texttt{equi-weight} baseline by 32.5\% and the \texttt{conditional} baseline by 4.4\% across all reference dataset sizes for bias mitigation.
}
\label{fig:multi_results}
\end{figure*}

\subsection{Fair Data Generation}
We compare our importance weighted approach against three baselines: (1) \texttt{equi-weight}: a BigGAN trained on the full dataset $\Dunbias \cup \Dbias$ that weighs every point equally; (2) \texttt{reference-only}: a BigGAN trained on the reference dataset $\Dunbias$; and (3) \texttt{conditional}: a conditional BigGAN where the conditioning label indicates whether a data point $\bx$ is from $\Dunbias (y=1)$ or $\Dbias (y=0)$.
In all our experiments, the \texttt{reference-only} variant which  only uses the reference dataset $\Dunbias$ for learning however failed to give any recognizable samples.
For a clean presentation of the results due to other methods, we hence ignore this baseline in the results below and defer the reader to the supplementary material for further results.

We also vary the size of the balanced dataset $\Dunbias$ relative to the unbalanced dataset size $\vert\Dbias\vert$: \texttt{perc} = \{0.1, 0.25, 0.5, 1.0\}. Here, \texttt{perc} = 0.1 denotes $\vert\Dunbias\vert$ = 10\% of $\vert\Dbias\vert$ and \texttt{perc} = 1.0 denotes $\vert\Dunbias\vert = \vert\Dbias\vert$.

\subsubsection{Single Attribute Splits}\label{sec:gender_exp}
We train our attribute (gender) classifier for evaluation on the entire CelebA training set, and achieve a level of 98\% accuracy on the held-out set. For each experimental setting, we evaluate bias mitigation based on the fairness discrepancy metric (Eq.~\ref{eq:fairness_discrepancy}) and also report sample quality based on FID~\citep{heusel2017gans}.

For the \texttt{bias} = 0.9
split, we show the samples generated via \texttt{imp-weight} in Figure~\ref{fig:90_10_results}a and the resulting fairness discrepancies in Figure~\ref{fig:90_10_results}b. Our framework generates samples that are slightly lower quality than \texttt{equi-weight} baseline samples shown in Figure~\ref{fig:baseline}, but is able to produce almost identical proportion of samples across the two genders. Similar observations hold for \texttt{bias} = 0.8, as shown in Figure~\ref{fig:80_20_results} in the supplement. We refer the reader to Supplement~\ref{sec:supp_80_20} for corresponding results and analysis, as well as for additional results on the Shapes3D dataset \citep{3dshapes18}.

\subsubsection{Multi-Attribute Split}
We conduct a similar experiment with a multi-attribute split based on gender and the presence of black hair.
The attribute classifier for the purpose of evaluation is now trained with a 4-way classification task instead of 2, and achieves an accuracy of roughly 88\% on the test set.  

Our model produces samples as shown in Figure~\ref{fig:multi_results}a with the discrepancy metrics shown in Figures~\ref{fig:multi_results}b, c respectively.
Even in this challenging setup involving two latent bias factors, we find that the importance weighted approach again outperforms the baselines in almost all cases in mitigating bias in the generated data while admitting only a slight deterioration in image quality overall. 
\section{Related Work}
\textbf{Fairness \& generative modeling. } 
There is a rich body of work in fair ML, which focus on different notions of fairness (e.g. demographic parity, equality of odds and opportunity) and study methods by which models can perform tasks such as classification in a non-discriminatory way~\citep{barocas-hardt-narayanan,dwork2012fairness,heidari2018fairness,du2018data}. Our focus is in the context of fair generative modeling.
The vast majority of related work in this area is centered around fair and/or privacy preserving \textit{representation learning}, which exploit tools from adversarial learning and information theory among others~\citep{zemel2013learning, edwards2015censoring, louizos2015variational, beutel2017data,song2018learning, adel2019one}. 
A unifying principle among these methods is such that a discriminator is trained to perform poorly in predicting an outcome based on a protected attribute. 
~\citet{ryu2017inclusivefacenet} considers transfer learning of race and gender identities as a form of weak supervision for predicting other attributes on datasets of faces.
While the end goal for the above works is classification, our focus is on data generation in the presence of dataset bias and we do not require explicit supervision for the protected attributes.

The most relevant prior works in data generation are  FairGAN~\citep{xu2018fairgan} and FairnessGAN~\citep{sattigeri2019fairness}.
The goal of both methods is to generate \textit{fair} datapoints and their labels as a preprocessing technique. This allows for learning a useful downstream classifier and obscures information about protected attributes. 
Again, these works are not directly comparable to ours as we do not assume explicit supervision regarding the protected attributes during training, and our goal is fair generation given unlabelled biased datasets where the bias factors are latent. Another relevant work is DB-VAE~\citep{amini2019uncovering}, which utilizes a VAE to learn the latent structure of sensitive attributes, and in turn employs importance weighting based on this structure to mitigate bias in downstream classifiers. Contrary to our work, these importance weights are used to directly sample (rare) data points with higher frequencies with the goal of training a classifier (e.g. as in a facial detection system), as opposed to fair generation.

\textbf{Importance reweighting. } Reweighting datapoints is a common algorithmic technique for problems such as dataset bias and class imbalance~\citep{byrd2018weighted}.
It has often been used in the context of fair classification~\citep{calders2009building}, for example, \cite{kamiran2012data} details \textit{reweighting} as a way to remove discrimination without relabeling instances.
For reinforcement learning, \citet{doroudi2017importance} used an importance sampling approach for selecting fair policies. There is also a body of work on fair clustering \citep{chierichetti2017fair,backurs2019scalable,bera2019fair,schmidt2018fair} which ensure that the clustering assignments are balanced with respect to some sensitive attribute.

\textbf{Density ratio estimation using classifiers. } The use of classifiers for estimating density ratios has a rich history of prior works across ML~\citep{sugiyama2012density}. 
For deep generative modeling, density ratios estimated by classifiers have been used for expanding the class of various learning objectives~\citep{nowozin2016f,mohamed2016learning,grover2018boosted}, evaluation metrics based on two-sample tests~\citep{gretton2007kernel,bowman2015generating,lopez2016revisiting,danihelka2017comparison,rosca2017variational,im2018quantitatively,gulrajani2018towards}, or improved Monte Carlo inference via these models~\citep{grover2019debiasing,azadi2018discriminator,turner2018metropolis,tao2018chi}. 
\citet{grover2019debiasing} use importance reweighting for mitigating model bias between $\pdata$ and $p_\theta$.

Closest related is the proposal of \citet{diesendruck2018importance} to use importance reweighting for learning generative models where training and test distributions differ, but explicit importance weights are provided for at least a subset of the training examples.
We consider a more realistic, weakly-supervised setting where we estimate the importance weights using a small reference dataset.
Finally, another related line of work in domain translation via generation considers learning via multiple datasets~\citep{cycle-gan, choi2018generative,grover2020alignflow} and it would be interesting to consider issues due to dataset bias in those settings in future work.

\section{Discussion}
\label{sec:discussion}
Our work presents an initial foray into the field of fair image generation with weak supervision, and we stress the need for caution in using our techniques and interpreting the empirical findings.
For scaling our evaluation, we proposed metrics that relied on a pretrained attribute classifier for inferring the bias in the generated data samples.
The classifiers we considered are highly accurate on all subgroups, but can have blind spots especially when evaluated on generated data.
For future work, we would like to investigate conducting human evaluations to mitigate such issues during evaluation~\citep{grgic2018human}.

As another case in point, our work calls for rethinking sample quality metrics for generative models in the presence of dataset bias~\citep{mitchell2019model}.
On one hand, our approach \textit{increases} the diversity of generated samples in the sense that the different subgroups are more balanced;
at the same time, however, variation across other image features 
\textit{decreases} because the newly generated underrepresented samples are learned from a smaller dataset of underrepresented subgroups.
Moreover, standard metrics such as FID even when evaluated with respect to a reference dataset, could exhibit a relative preference for models trained on larger datasets with little or no bias correction to avoid even slight compromises on perceptual sample quality.

More broadly, this work is yet another reminder that we must be mindful of the decisions made at each stage in the development and deployment of ML systems~\citep{abebe2020roles}. Factors such as the dataset used for training~\citep{gebru2018datasheets,sheng2019woman,jo2020lessons} or algorithmic decisions such as the loss function or evaluation metric~\citep{hardt2016equality,buolamwini2018gender,kim2018interpretability,liu2018delayed,hashimoto2018fairness}, among others, may have undesirable consequences. Becoming more aware of these downstream impacts will help to mitigate the potentially discriminatory nature of our present-day systems~\citep{kaeser2020positionality}.

\section{Conclusion}
We considered the task of fair data generation given access to a (potentially small) reference dataset and a large biased dataset.
For data-efficient learning, we proposed an importance weighted objective that corrects bias by reweighting the biased datapoints.
These weights are estimated by a binary classifier.
Empirically, we showed that our technique outperforms baselines by up to 34.6\% on average in reducing dataset bias on CelebA without incurring a significant reduction in sample quality. We provide reference implementations in PyTorch \citep{paszke2017automatic}, and the codebase for this work is open-sourced at \texttt{https://github.com/ermongroup/fairgen}.

In the future, it would be interesting to explore whether even weaker forms of supervision would be possible for this task, e.g., when the biased dataset has a somewhat disjoint but \textit{related} support from the small, reference dataset -- this would be highly reflective of the diverse data sources used for training many current and upcoming large-scale ML systems~\citep{ratner2017snorkel}.
\section*{Acknowledgements}
We are thankful to Hima Lakkaraju, Daniel Levy, Mike Wu, Chris Cundy, and Jiaming Song for insightful discussions and feedback. KC is supported by the NSF GRFP, Qualcomm Innovation Fellowship, and Stanford Graduate Fellowship, and AG is supported by the MSR Ph.D. fellowship, Stanford Data Science scholarship, and Lieberman fellowship. This research was funded by NSF (\#1651565, \#1522054, \#1733686), ONR (N00014-19-1-2145), AFOSR (FA9550-19-1-0024), and Amazon AWS. 
\bibliography{references}

\bibliographystyle{icml2020}
\raggedbottom

\pagebreak
\newpage
\renewcommand\thesection{\Alph{section}}
\setcounter{section}{0}
\onecolumn
\section*{Supplementary Material}
\section{Proof of Theorem~\ref{thm:optimal_nce}}\label{app:proof}

\begin{proof}

Since $\pbias(\bx \vert \bz=k)$ and $\pbias(\bx \vert \bz=k')$ have disjoint supports for $k\neq k'$, we know that for all $\bx$, there exists a deterministic mapping $f: \mathcal{X} \to \mathcal{Z}$ such that $\pbias(\bx \vert \bz=f(\bx))>0$.

 Further, for all $\tilde{\bx} \not \in f^{-1}(\bz)$:
\begin{align}
    \label{eq:many_to_one_1}\pbias(\tilde{\bx} \vert \bz=f(\bx)) &= 0;\\
    \label{eq:many_to_one_2}\punbias(\tilde{\bx} \vert \bz=f(\bx)) &= 0.
\end{align}

Combining Eqs.~\ref{eq:many_to_one_1},\ref{eq:many_to_one_2} above with the assumption in Eq.~\ref{eq:cond_match}, we can simplify the density ratios as:
\begin{align}\label{eq:bias_ratio}
    \frac{\pbias(\bx)}{\punbias(\bx)} &= \frac{\int_{\bz}\pbias(\bx \vert \bz ) \pbias(\bz)\mathrm{d}\bz}{\int_{\bz}\punbias(\bx \vert \bz) \punbias(\bz)\mathrm{d}\bz} \\
    &= \frac{\pbias(\bx \vert f(\bx)) \pbias(f(\bx))}{\punbias(\bx \vert f(\bx)) \punbias(f(\bx))}  \;\; \text{(using Eqs.~\ref{eq:many_to_one_1},\ref{eq:many_to_one_2})}\\
    &= \frac{\pbias(f(\bx))}{\punbias(f(\bx))}\;\; \text{(using Eq.~\ref{eq:cond_match})}\\
    &= b(f(\bx)).
\end{align}
From Eq.~\ref{eq:imp_wts} and Eq.~\ref{eq:bias_ratio}, the Bayes optimal classifier $c^\ast$ can hence be expressed as:
\begin{align}\label{eq:bayes_opt_dre}
    c^\ast(Y=1 \vert \bx) &= \frac{1}{\gamma b(f(\bx))+1}.
\end{align}

The optimal cross-entropy loss of a binary classifier $c$ for density ratio estimation (DRE) can then be expressed as:
\begin{align}\label{eq:ece_loss}
NCE(c^\ast) &= \frac{1}{\gamma+1}\bbE_{\punbias(\bx)} [\log c^\ast(Y=1 \vert \bx)] + \frac{\gamma}{\gamma+1}\bbE_{\pbias(\bx)}[\log c^\ast(Y=0 \vert \bx)]\\
&= \bbE_{\punbias(\bx)} \left[\log \frac{1}{\gamma b(f(\bx))+1}\right] + \frac{\gamma}{\gamma+1}\bbE_{\pbias(\bx)}\left[\log \frac{\gamma b(f(\bx))}{\gamma b(f(\bx))+1}\right]  \;\; \text{(using Eq.~\ref{eq:bayes_opt_dre})}\\
&= \frac{1}{\gamma+1}\bbE_{\punbias(\bz)} \bbE_{\punbias(\bx \vert \bz)} \left[\log \frac{1}{\gamma b(f(\bx))+1}\right] + \frac{\gamma}{\gamma+1}\bbE_{\pbias(\bz)}  \bbE_{\pbias(\bx \vert \bz)}\left[\log \frac{\gamma b(f(\bx))}{\gamma b(f(\bx))+1}\right]\\
&= \frac{1}{\gamma+1}\bbE_{\punbias(\bz)} \left[\log \frac{1}{\gamma b(\bz)+1}\right] + \frac{\gamma}{\gamma+1}\bbE_{\pbias(\bz)} \left[\log \frac{\gamma b(\bz)}{\gamma b(\bz)+1}\right]  \;\; \text{(using Eqs.~\ref{eq:many_to_one_1},\ref{eq:many_to_one_2})}. 
\end{align}
\end{proof}

\section{Dataset Details}
\subsection{Dataset Construction Procedure}\label{sec:dataset}
We construct such dataset splits from the full CelebA training set using the following procedure. We initially fix our dataset size to be roughly 135K out of the total 162K based on the total number of females present in the data. Then for each level of \texttt{bias}, we partition 1/4 of males and 1/4 of females into $\Dunbias$ to achieve the 50-50 ratio. The remaining number of examples are used for $\Dbias$, where the number of males and females are adjusted to match the desired level of \texttt{bias} (e.g. 0.9). Finally at each level of reference dataset size \texttt{perc}, we discard the appropriate fraction of datapoints from both the male and female category in $\Dunbias$. For example, for \texttt{perc} = 0.5, we discard half the number of females and half the number of males from $\Dunbias$.

\subsection{FID Calculation}\label{sec:unbiased_fid}
As noted Sections 2.3 and 6, the FID metric may exhibit a relative preference for models trained on larger datasets in order to maximize perceptual sample quality, at the expense of propagating or amplifying existing dataset bias. In order to obtain an estimate of sample quality that would also incorporate a notion of fairness across sensitive attribute classes, we pre-computed the relevant FID statistics on a "balanced" construction of the CelebA dataset that matches our reference dataset $\punbias$. That is, we used all train/validation/test splits of the data such that: (1) for single-attribute, there were 50-50 portions of males and females; and (2) for multi-attribute, there were even proportions of examples across all 4 classes (females with black hair, females without black hair, males with black hair, males without black hair). We report "balanced" FID numbers on these pre-computed statistics throughout the paper.

\section{Architecture and Hyperparameter Configurations}\label{sec:hyperparams}

We used PyTorch~\cite{paszke2017automatic} for all our experiments.
Our overall experimental framework involved three different kinds of models which we describe below.

\subsection{Attribute Classifier}
We use the same architecture and hyperparameters for both the single- and multi-attribute classifiers. Both are variants of ResNet-18 where the output number of classes correspond to the dataset split (e.g. 2 classes for single-attribute, 4 classes for the multi-attribute experiment). 

\paragraph{Architecture.} 
We provide the architectural details in Table \ref{table:resnet_arch} below:
 
\begin{table}[h!]
\centering
\begin{tabular}{c|c}
\hline
\textbf{Name}& \textbf{Component}\\
\hline
conv1 & $7\times7$ conv, 64 filters. stride 2 \\
\hline
Residual Block 1 & $3 \times 3$ max pool, stride 2 \\
\hline
Residual Block 2 & 
$
\begin{bmatrix}
    3 \times 3 \text{ conv, } 128 \text{ filters} \\
    3 \times 3 \text{ conv, } 128 \text{ filters}
\end{bmatrix}
\times 2$ \\
\hline
Residual Block 3 & $
\begin{bmatrix}
    3 \times 3 \text{ conv, } 256 \text{ filters} \\
    3 \times 3 \text{ conv, } 256 \text{ filters}
\end{bmatrix}
\times 2$ \\
\hline
Residual Block 4 & $
\begin{bmatrix}
    3 \times 3 \text{ conv, } 512 \text{ filters} \\
    3 \times 3 \text{ conv, } 512 \text{ filters}
\end{bmatrix}
\times 2$ \\
\hline
Output Layer & $7 \times 7$ average pool stride 1, fully-connected, softmax \\
\hline
\end{tabular}
\caption{ResNet-18 architecture adapted for attribute classifier.}
\label{table:resnet_arch}
\end{table}

\paragraph{Hyperparameters.} During training, we use a batch size of 64 and the Adam optimizer with learning rate = 0.001. The classifiers learn relatively quickly for both scenarios and we only needed to train for 10 epochs. We used early stopping with the validation set in CelebA to determine the best model to use for downstream evaluation.

\subsection{Density Ratio Classifier}
\paragraph{Architecture.} 
We provide the architectural details in Table \ref{table:resnet_arch}.
 
\begin{table}[h!]
\centering
\begin{tabular}{c|c}
\hline
\textbf{Name}& \textbf{Component}\\
\hline
conv1 & $7\times7$ conv, 64 filters. stride 2 \\
\hline
Residual Block 1 & $3 \times 3$ max pool, stride 2 \\
\hline
Residual Block 2 & $
\begin{bmatrix}
    3 \times 3 \text{ conv, } 128 \text{ filters} \\
    3 \times 3 \text{ conv, } 128 \text{ filters}
\end{bmatrix}
\times 2$ \\
\hline
Residual Block 3 & $
\begin{bmatrix}
    3 \times 3 \text{ conv, } 256 \text{ filters} \\
    3 \times 3 \text{ conv, } 256 \text{ filters}
\end{bmatrix}
\times 2$ \\
\hline
Residual Block 4 & $
\begin{bmatrix}
    3 \times 3 \text{ conv, } 512 \text{ filters} \\
    3 \times 3 \text{ conv, } 512 \text{ filters}
\end{bmatrix}
\times 2$ \\
\hline
Output Layer & $7 \times 7$ average pool stride 1, fully-connected, softmax \\
\hline
\end{tabular}
\caption{ResNet-18 architecture adapted for attribute classifier.}
\label{table:resnet_arch_2}
\end{table}

\paragraph{Hyperparameters.} We also use a batch size of 64, the Adam optimizer with learning rate = 0.0001, and a total of 15 epochs to train the density ratio estimate classifier. 

\paragraph{Experimental Details.} We note a few steps we had to take during the training and validation procedure. Because of the imbalance in both (a) unbalanced/balanced dataset sizes and (b) gender ratios, we found that a naive training procedure encouraged the classifier to predict all data points as belonging to the biased, unbalanced dataset. To prevent this phenomenon from occuring, two minor modifications were necessary:
\begin{enumerate}
    \item We \textit{balance} the distribution between the two datasets in each minibatch: that is, we ensure that the classifier sees equal numbers of data points from the balanced ($y=1$) and unbalanced ($y=0$) datasets for each batch. This provides enough signal for the classifier to learn meaningful density ratios, as opposed to a trivial mapping of all points to the larger dataset.
    \item We apply a similar balancing technique when testing against the validation set. However, instead of balancing the minibatch, we weight the contribution of the losses from the balanced and unbalanced datasets. Specifically, the loss is computed as: 
    $$\mathcal{L} = \frac{1}{2} 
    \left(
    \frac{\text{acc}_\text{pos}}{\text{n}_\text{pos}} + 
    \frac{\text{acc}_\text{neg}}{\text{n}_\text{neg}}
    \right)
    $$
    where the subscript \texttt{pos} denotes examples from the balanced dataset ($y=1$) and \texttt{neg} denote examples from the unbalanced dataset ($y=0$).
\end{enumerate}

\subsection{BigGAN}
\paragraph{Architecture.}
The architectural details for the BigGAN are provided in Table \ref{table:biggan_arch}.
\begin{table}[!h]
\centering
\begin{tabular}{c|c}
\hline
\textbf{Generator} & \textbf{Discriminator} \\
\hline
$1 \times 1 \times 2ch$ Noise & $64 \times 64 \times 3$ Image \\
\hline
Linear $1 \times 1 \times 16ch \rightarrow 1 \times 1 \times 16ch$  & ResBlock down $1ch \rightarrow 2ch$\\
ResBlock up $16ch \rightarrow 16ch$ &  Non-Local Block ($64 \times 64$)\\
\hline
ResBlock up $16ch \rightarrow 8ch$ & ResBlock down $2ch \rightarrow 4ch$\\
\hline
ResBlock up $8ch \rightarrow 4ch$ & ResBlock down $4ch \rightarrow 8ch$\\
\hline
ResBlock up $4ch \rightarrow 2ch$ & ResBlock down $8ch \rightarrow 16ch$\\
\hline
Non-Local Block ($64 \times 64$) & ResBlock down $16ch \rightarrow 16ch$\\
ResBlock up $4ch \rightarrow 2ch$ & ResBlock $16ch \rightarrow 16ch$\\
\hline
BatchNorm, ReLU, $3 \times 3$ Conv $1ch \rightarrow 3$ & ReLU, Global sum pooling\\
\hline
Tanh & Linear $\rightarrow 1$\\
\hline
\end{tabular}
\caption{Architecture for the generator and discriminator. Notation: $ch$ refers to the channel width multiplier, which is $64$ for $64 \times 64$ CelebA images. ResBlock up refers to a Generator Residual Block in which the input is passed through a ReLU activation followed by two $3 \times 3$ convolutional layers with a ReLU activation in between. ResBlock down refers to a Discriminator Residual Block in which the input is passed through two $3 \times 3$ convolution layers with a ReLU activation in between, and then downsampled. Upsampling is performed via nearest neighbor interpolation, whereas downsampling is performed via mean pooling. ``ResBlock up/down $n \rightarrow m$'' indicates a ResBlock with $n$ input channels and $m$ output channels.}
\label{table:biggan_arch}
\end{table}
\paragraph{Hyperparameters.} We sweep over a batch size of $\{16, 32, 64, 128\}$, and the Adam optimizer with learning rate $=0.0002$, and $\beta_1=0, \beta_2=0.99$. We train the model by taking 4 discriminator gradient steps per generator step. Because the BigGAN was originally designed for scaling up class-conditional image generation, we fix all conditioning labels for the unconditional baselines (\texttt{imp-weight, equi-weight}) to the zero vector.

Additionally, we investigate the role of \textit{flattening} in the density ratios used to train the generative model. As in~\citep{grover2019debiasing}, flattening the density ratios via a power scaling parameter $\alpha \geq 0$ is defined as:
\begin{align*}
    \mathbb{E}_{\bx \sim \punbias [\ell(\bx, \theta)]} \approx \frac{1}{T} \sum_{i=1}^T w(\bx_i)^\alpha \ell(\bx_i, \theta)
\end{align*}
where $\bx_i \sim \pbias$. We perform a hyperparameter sweep over $\alpha=\{0.5, 1.0, 1.5\}$, while noting that $\alpha=0$ is equivalent to the \texttt{equi-weight} baseline (no reweighting).

\section{Density Ratio Classifier Analysis}\label{sec:calib_acc}

\begin{figure}[h!]
\centering
\subfigure[\texttt{bias=0.9}, \texttt{perc=1.0}]{\includegraphics[width=.3\linewidth]{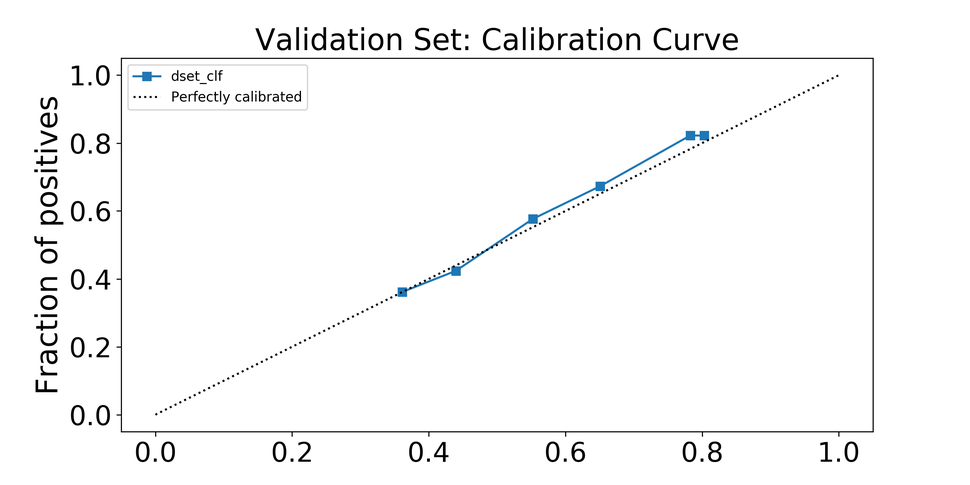}}
\subfigure[\texttt{bias=0.8}, \texttt{perc=1.0}]{\includegraphics[width=.3\linewidth]{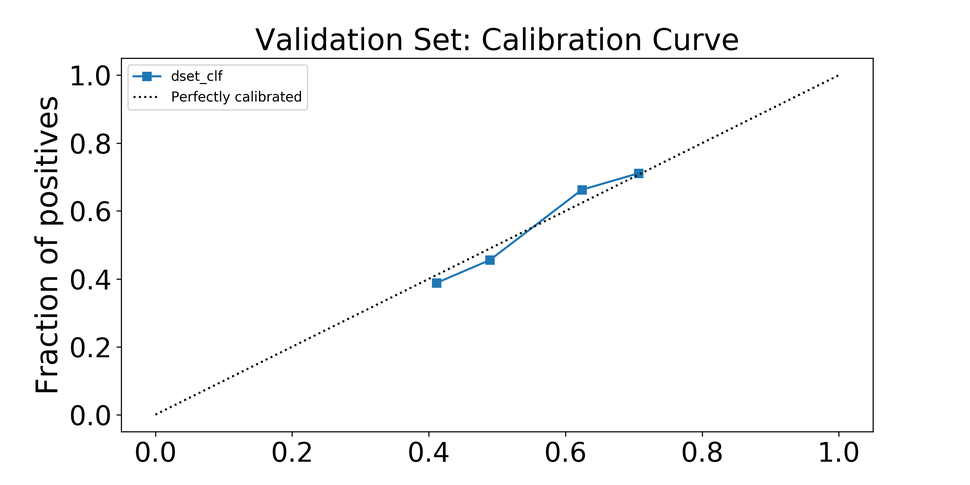}}
\subfigure[\texttt{multi}, \texttt{perc=1.0}]{\includegraphics[width=.3\linewidth]{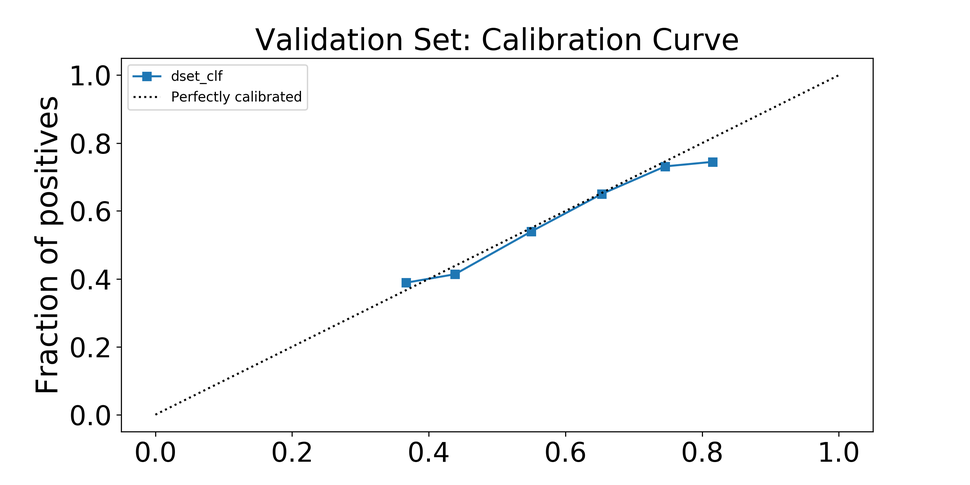}}
\subfigure[\texttt{bias=0.9}, \texttt{perc=0.5}]{\includegraphics[width=.3\linewidth]{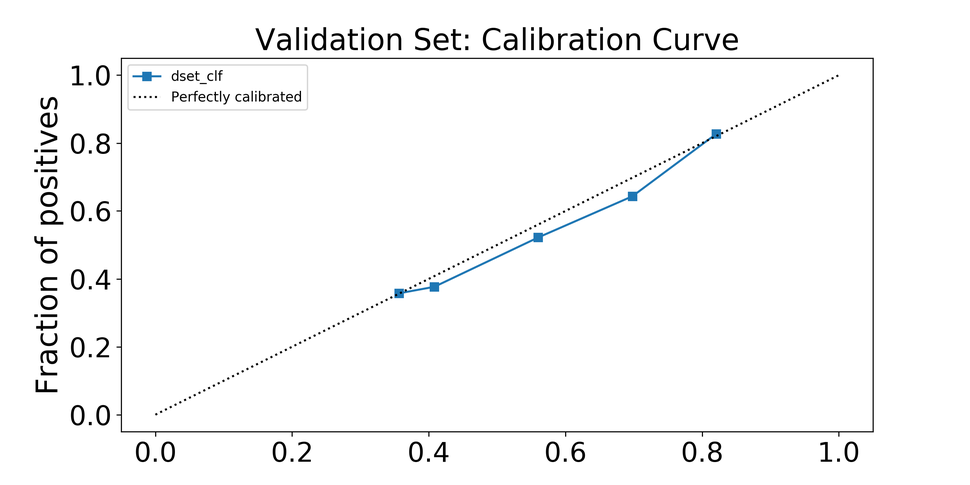}}
\subfigure[\texttt{bias=0.8}, \texttt{perc=0.5}]{\includegraphics[width=.3\linewidth]{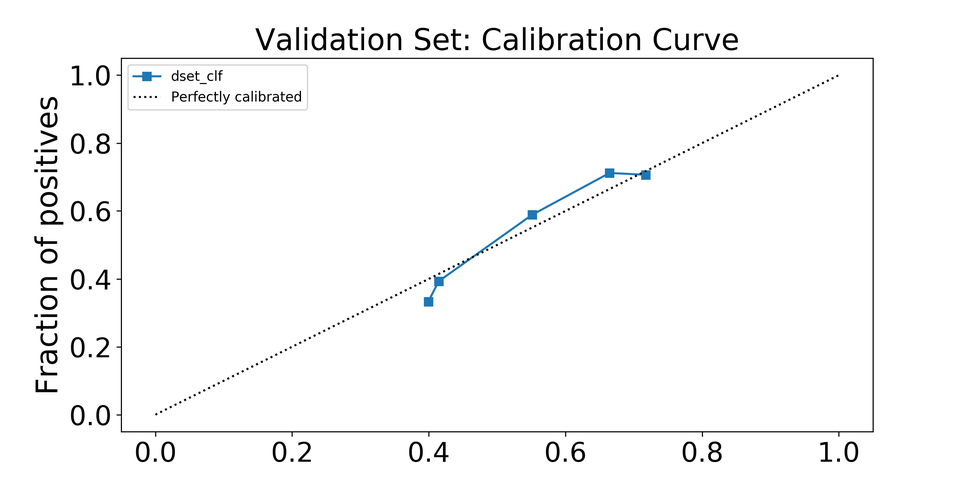}}
\subfigure[\texttt{multi}, \texttt{perc=0.5}]{\includegraphics[width=.3\linewidth]{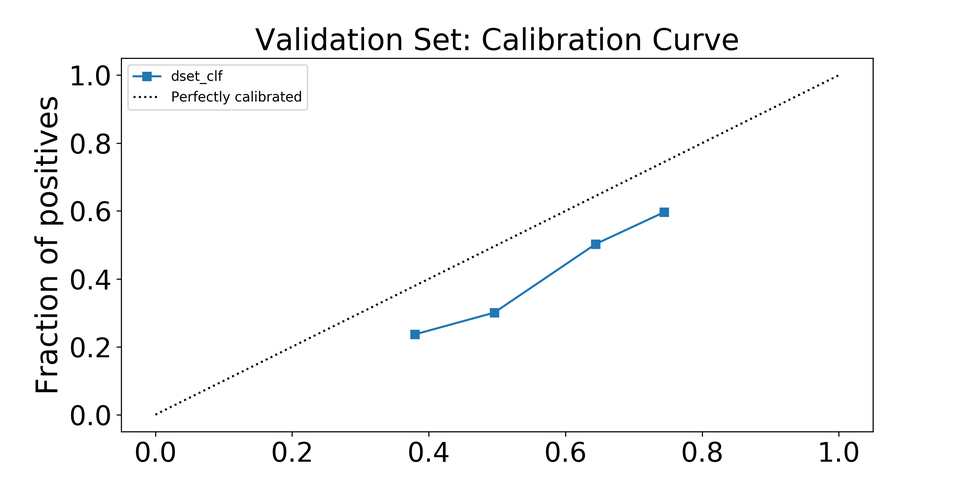}}
\subfigure[\texttt{bias=0.9},  \texttt{perc=0.25}]{\includegraphics[width=.3\linewidth]{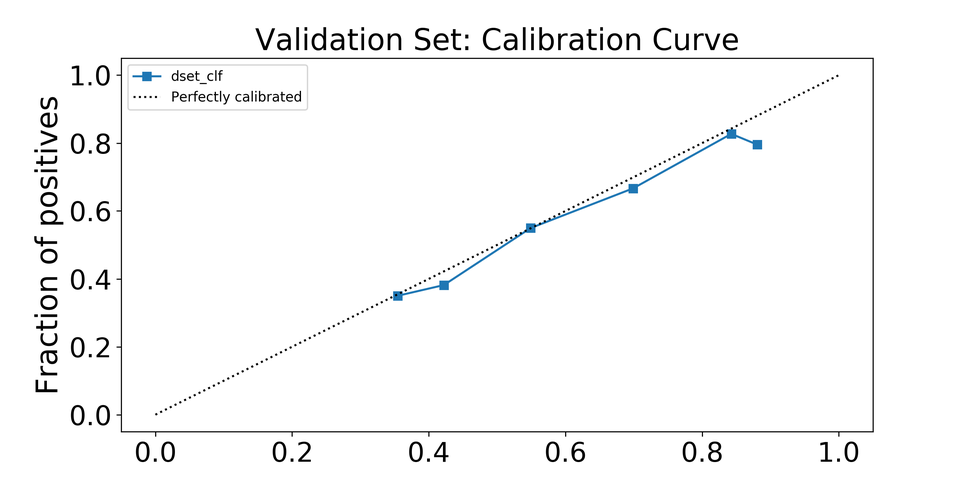}}
\subfigure[\texttt{bias=0.8}, \texttt{perc=0.25}]{\includegraphics[width=.3\linewidth]{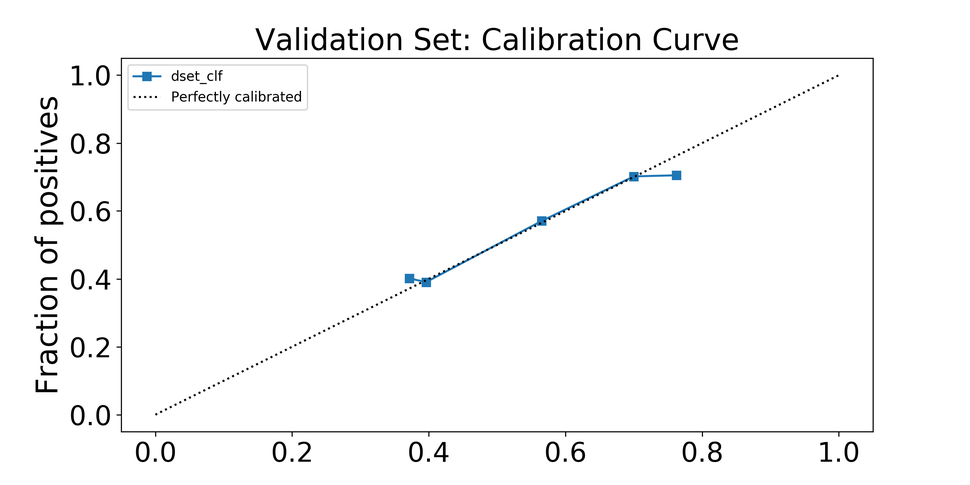}}
\subfigure[Multi \texttt{perc}=0.25]{\includegraphics[width=.3\linewidth]{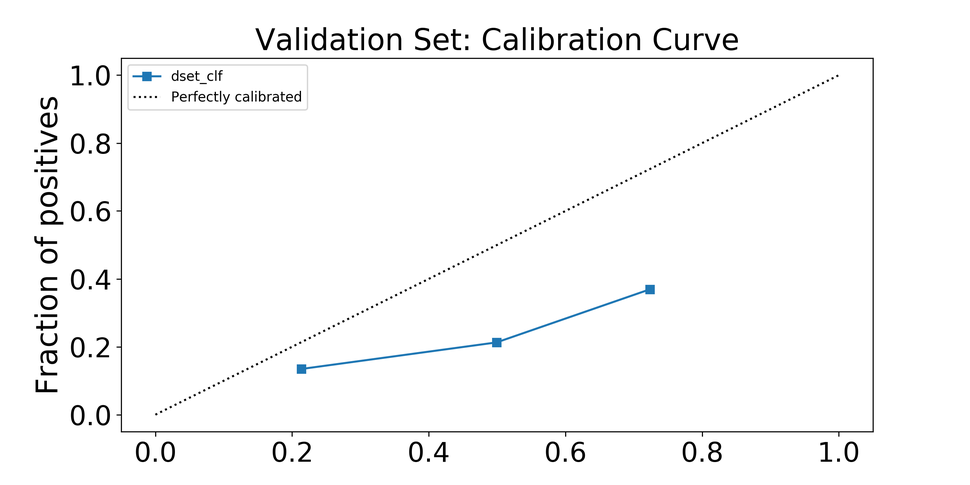}}
\subfigure[\texttt{bias=0.9},  \texttt{perc=0.1}]{\includegraphics[width=.3\linewidth]{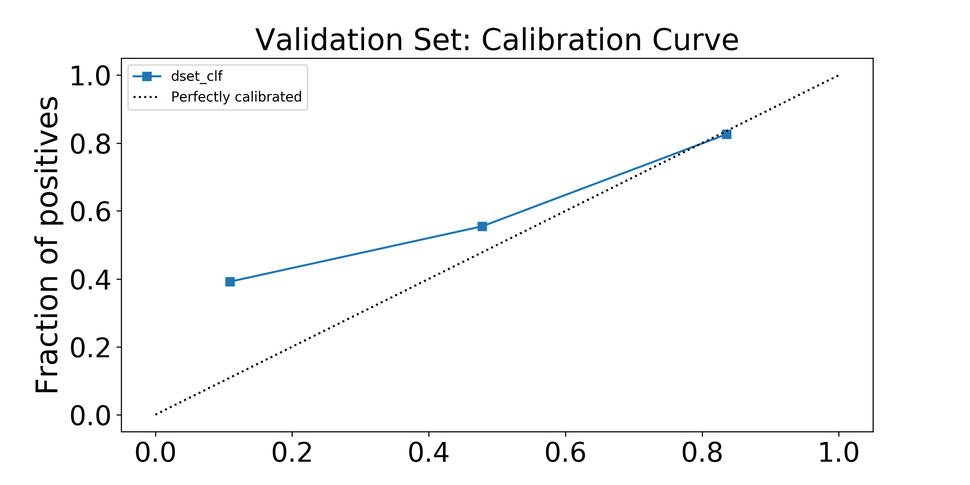}}
\subfigure[\texttt{bias=0.8}, \texttt{perc=0.1}]{\includegraphics[width=.3\linewidth]{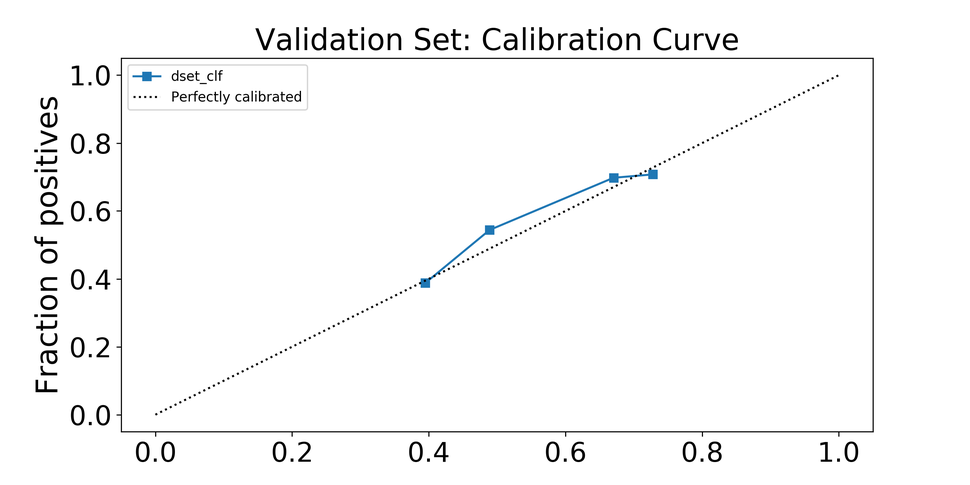}}
\subfigure[Multi \texttt{perc=0.1}]{\includegraphics[width=.3\linewidth]{new_figures/appendix/calibration/90_10_small01_calibration_curve.png}}
\caption{Calibration curves}
\label{fig:calibration}
\end{figure}

In Figure~\ref{fig:calibration}, we show the calibration curves for the density ratio classifiers for each of the $\Dunbias$ dataset sizes across all levels of bias.
As evident from the plots, most classifiers are already calibrated and did not require any post-training recalibration.

\section{Fairness Discrepancy Metric}\label{sec:fairdisc_metric}
In this section, we motivate the fairness discrepancy metric and elaborate upon its construction. Recall from Equation~\ref{eq:fairness_discrepancy} that the metric is as follows for the sensitive attributes $\bu$:
    \begin{align*}
        f(\punbias, p_\theta) = \vert \bbE_{\punbias}[p(\bu \vert \bx)] - \bbE_{p_\theta}[p(\bu \vert \bx)] \vert_2.
    \end{align*}

To gain further insight into what the metric is capturing, we rewrite the joint distribution of the sensitive attributes $\bu$ and our data $\bx$: (1) $\punbias(\bu, \bx) = p(\bu|\bx) \punbias(\bx)$ and (2)  $p_\theta(\bu, \bx) = p(\bu|\bx) p_\theta(\bx)$. 
Then, marginalizing out $\bx$ and only looking at the distribution of $\bu$, we get that  $p(\bu) = \int p(\bu,\bx) dx = \int p(\bu|\bx) p(\bx) dx = \mathbb{E}_p(\bx) p(\bu|\bx)$. Thus the fairness discrepancy metric is $|\punbias(\bu) - p_\theta(\bu)|_2$.

This derivation is informative because it allows us to relate the fairness discrepancy metric to the behavior of the (oracle) attribute classifier. Suppose we use a deterministic classifier $p(\bu|\bx)$ as in the paper: that is, we threshold at 0.5 to label all examples with $p(\bu|\bx) > 0.5$ as $\bu = 1$ (e.g. male), and $p(\bu|\bx) \leq 0.5$ as $\bu = 0$ (e.g. female). In this setting, the fairness discrepancy metric simply becomes the $\ell_2$ distance in proportions of different populations between the true (reference) dataset and the generated examples. 

It is easy to see that if we use a probabilistic classifier (without thresholding), we can obtain similar distributional discrepancies between the true (reference) data distribution and the distribution learned by $p_\theta$ such as the empirical KL.

\section{Additional Results}

\subsection{Toy Example with Gaussian Mixture Models}
We demonstrate the benefits of our reweighting technique through a toy Gaussian mixture model example. In Figure ~\ref{fig:toy_gmm}(a), the reference distribution is shown in blue and the biased distribution in red. The blue distribution is an equi-weighted mixture of 2 Gaussians (reference), while the red distribution is a non-uniform weighted mixture of 2 Gaussians (“biased”). The weights are 0.9 and 0.1 for the two Gaussians in the biased case. We trained a two layer multi-layer perceptron (MLP) (with tanh activations) to estimate density ratios based on 1000 samples drawn from the two distributions. We then compare the Bayes optimal and estimated density ratios in Figure ~\ref{fig:toy_gmm}(b), and observe that the estimated density ratios closely trace the ratios output by the Bayes optimal classifier.
\begin{figure}[t]
\centering
\subfigure[Biased and Reference Distributions]{\includegraphics[width=.46\textwidth]
{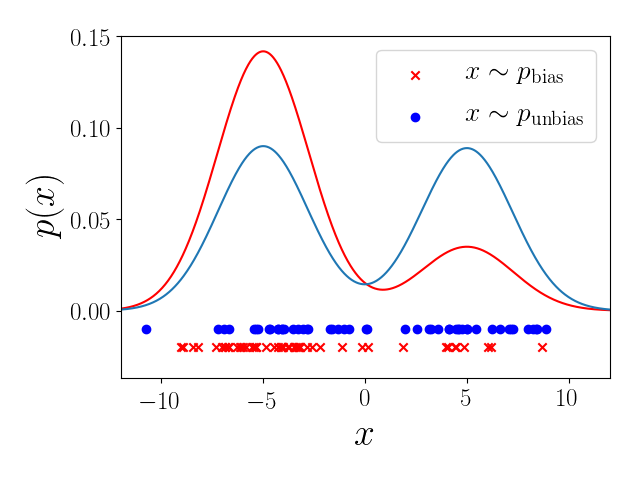}}
\subfigure[Density Ratios]{\includegraphics[width=.46\textwidth]
{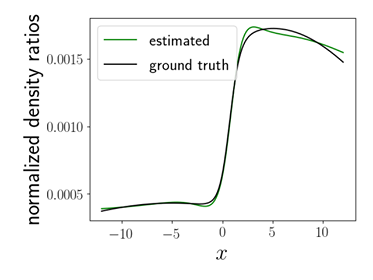}}
\caption{(a) Comparison between two biased (non-uniform weighted mixture, shown in blue) and reference (equi-weighted Gaussian mixture, shown in red). After the optimal density ratios are estimated using a two-layer MLP, we observe that the estimated density ratios are extremely similar to the ratios output by the Bayes optimal classifier, as desired.}
\label{fig:toy_gmm}
\end{figure}

\subsection{Shapes3D Dataset}
For this experiment, we used the Shapes3D dataset \citep{3dshapes18} which is comprised of 480,000 images of shapes with six underlying attributes. We chose a random attribute (floor color), restricted it to two possible instantiations (red vs. blue), and then applied Algorithm 1 in the main text for \texttt{bias}=0.9 for this setting. Training on the large biased dataset (containing excess of red floors) induces an average fairness discrepancy of 0.468 as shown in Figure ~\ref{fig:3dshapes}(a). In contrast, applying the importance-weighting correction on the large biased dataset enabled us to train models that yielded an average fairness discrepancy of 0.002 as shown in Figure ~\ref{fig:3dshapes}(b). 

\begin{figure}[t]
\centering
\subfigure[Baseline samples]{\includegraphics[width=.46\textwidth]
{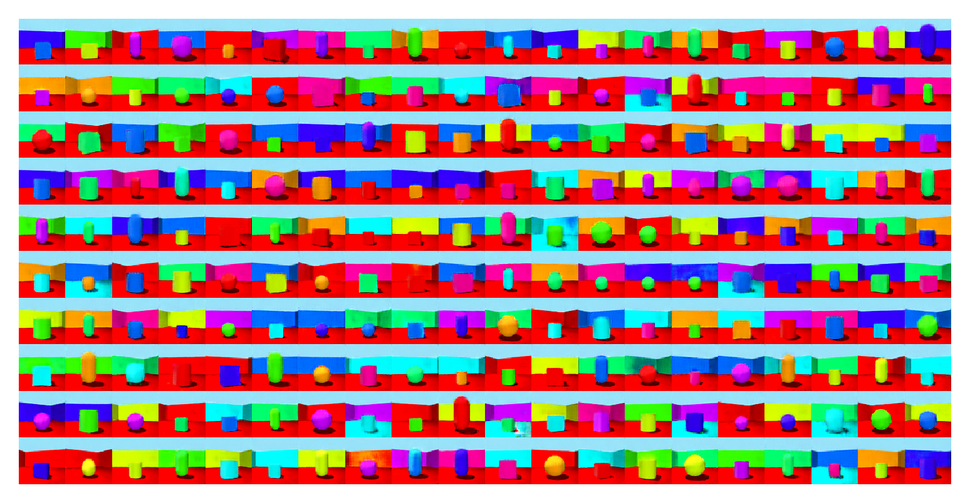}}
\subfigure[Samples after reweighting]{\includegraphics[width=.46\textwidth]
{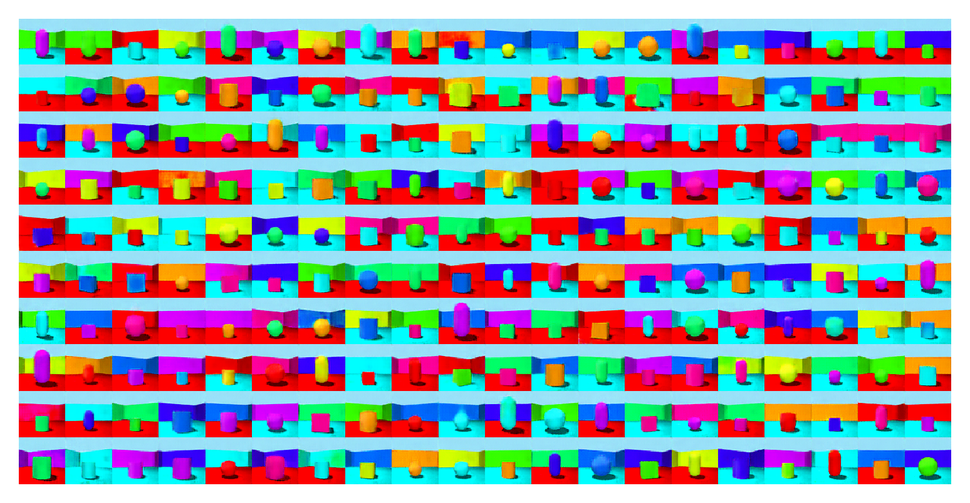}}
\caption{Results from the Shapes3D dataset. After restricting the possible floor colors to red or blue and using a biased dataset of \texttt{bias=0.9}, we find that the samples obtained after importance reweighting (b) are considerably more balanced than those without reweighting (a), as desired.}
\label{fig:3dshapes}
\end{figure}

\subsection{Downstream Classification Task}\label{sec:downstream}
We note that although it is difficult to directly compare our model to supervised baselines such as FairGAN \citep{xu2018fairgan} and FairnessGAN \citep{sattigeri2019fairness} due to the unsupervised nature of our work, we conduct further evaluations on a relevant downstream task classification task, adapted to a fairness setting.

In this task, we augment a biased dataset (165K exmaples) with a "fair" dataset (135K examples) generated by a pre-trained GAN to use for training a classifier, then evaluate the classifier's performance on a held-out dataset of true examples. We train a conditional GAN using the AC-GAN objective \citep{odena2017conditional}, where the conditioning is on an arbitrary downstream attribute of interest (e.g., we consider the “attractiveness” attribute of CelebA as in \citep{sattigeri2019fairness}). Our goal is to learn a fair classifier trained to predict the attribute of interest in a way that is fair with respect to gender, the sensitive attribute. 

As an evaluation metric, we use the \textit{demographic parity distance} ($\Delta_\textrm{dp}$), denoted as the absolute difference in demographic parity between two classifiers $f$ and $g$:
$$\Delta_\textrm{dp} = \vert f_\textrm{dp} - g_\textrm{dp} \vert$$
We consider 2 AC-GAN variants: (1) \texttt{equi-weight} trained on $\Dbias \cup \Dunbias$; and (2) \texttt{imp-weight}, which reweights the loss by the density ratio estimates. The classifier is trained on both real and generated images for both AC-GAN variants, with the labels given by the conditioned attractiveness values for the respective generations. The classifier is then asked to predict attractiveness for the CelebA test set. 

As shown in Table~\ref{table:downstream}, we find that the classifier trained on both real data and synthetic data generated by our \texttt{imp-weight} AC-GAN achieved a much lower $\Delta_{dp}$ than the \texttt{equi-weight} baseline, demonstrating that our method achieves a higher demographic parity with respect to the sensitive attribute, despite the fact that we did not explicitly use labels during training.

\begin{table}[ht]
\centering
\begin{tabular}{l|c|c|c}
\toprule
\textbf{Model} & \textbf{Accuracy} & \textbf{NLL} &\textbf{$\Delta_{dp}$} \\
Baseline classifier, no data augmentation & 79\% & 0.7964 & 0.038\\ 
\texttt{equi-weight} & \textbf{79\%} & 0.7902 & 0.032\\ 
\texttt{imp-weight} (ours) & 75\% & \textbf{0.7564} & \textbf{0.002}\\ 
\bottomrule
\end{tabular}
\caption{For the CelebA dataset, classifier accuracy, negative log-likelihood, and $\Delta_\textrm{dp}$ across \texttt{bias} = $0.9$ and \texttt{perc}=$1.0$ on the downstream classification task. Our importance-weighting method learns a fair classifier that achieves a lower $\Delta_\textrm{dp}$, as desired, albeit with a slight reduction in accuracy.}
\label{table:downstream}
\end{table}

\subsection{Single-Attribute Experiment}\label{sec:supp_80_20}
The results for the single-attribute split for \texttt{bias=0.8}
are shown in Figure~\ref{fig:80_20_results}.

\begin{figure*}[!h]
\centering
\subfigure[Samples generated via importance reweighting. Faces above orange line classified as female (55/100) while rest as male.
]{\includegraphics[width=\linewidth]
{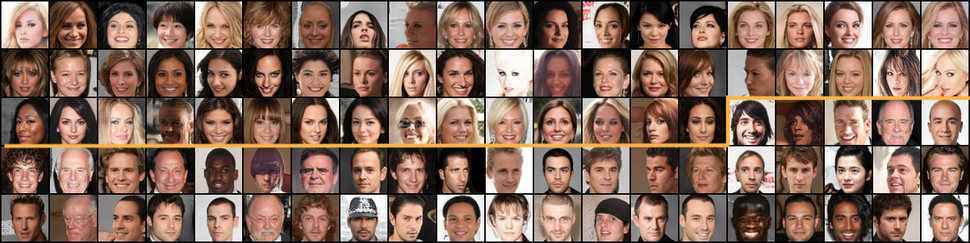}}
\subfigure[Fairness Discrepancy]{\includegraphics[width=.5\textwidth]
{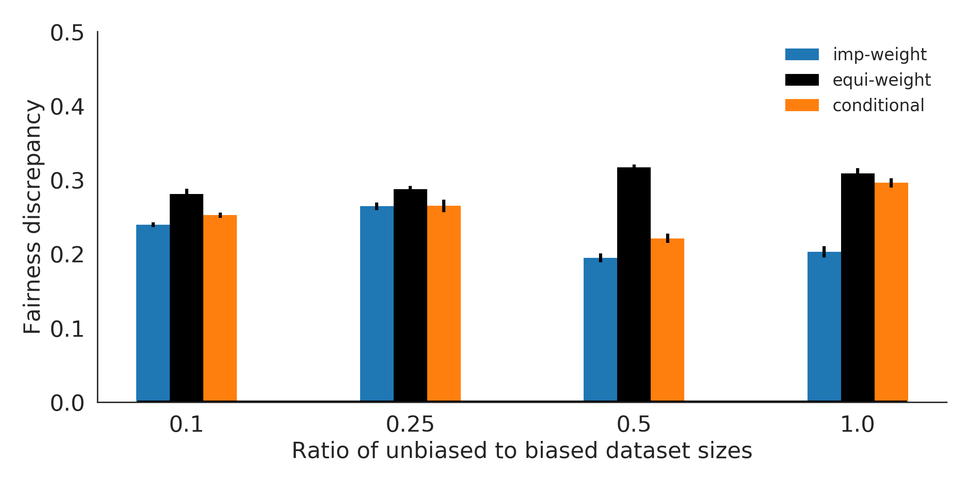}}
\subfigure[FID]{\includegraphics[width=.49\textwidth]
{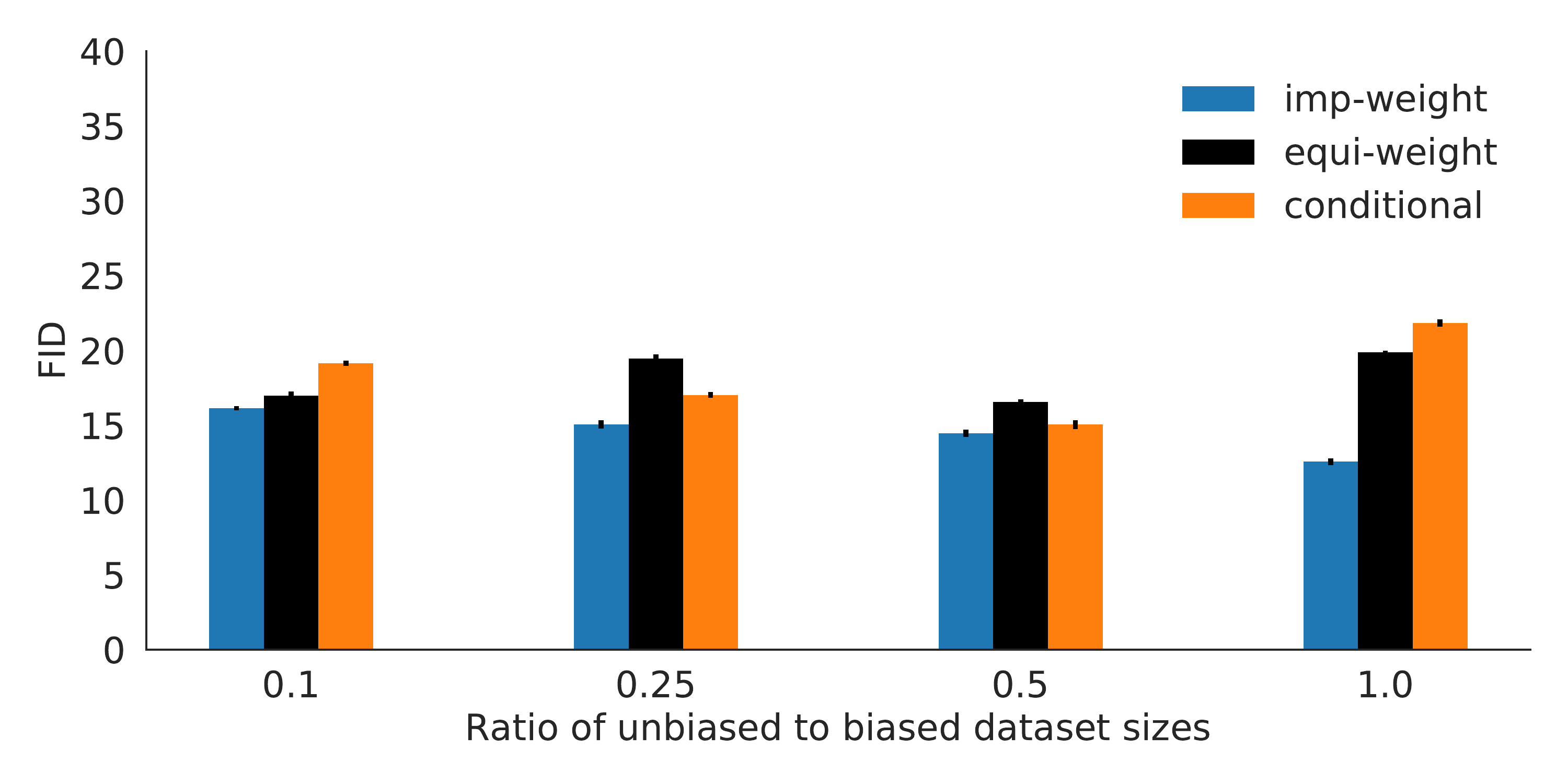}}
\caption{
Single Attribute Dataset Bias Mitigation for \texttt{bias}=0.8.
Standard error in (b) and (c) over 10 independent evaluation sets of 10,000 samples each drawn from the models. Lower fairness discrepancy and FID is better. We find that on average, \texttt{imp-weight} outperforms the \texttt{equi-weight} baseline by 23.9\% and the \texttt{conditional} baseline by 12.2\% across all reference dataset sizes for bias mitigation.
}
\label{fig:80_20_results}
\end{figure*}

\raggedbottom

\pagebreak

\section{Additional generated samples}
Additional samples for other experimental configuration are displayed in the following pages.
\begin{figure}[h!]
\centering
\subfigure[\texttt{equi-weight}]{\includegraphics[width=\textwidth]{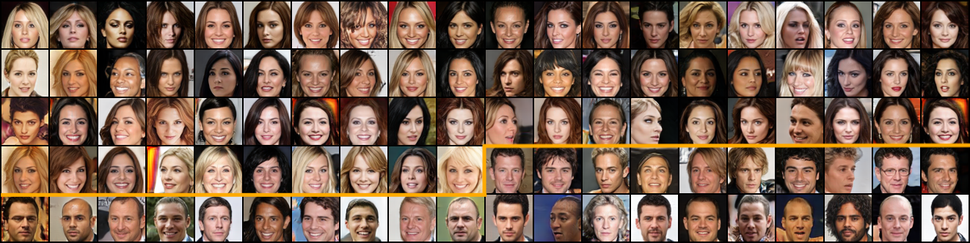}}
\subfigure[\texttt{conditional}]{\includegraphics[width=\textwidth]{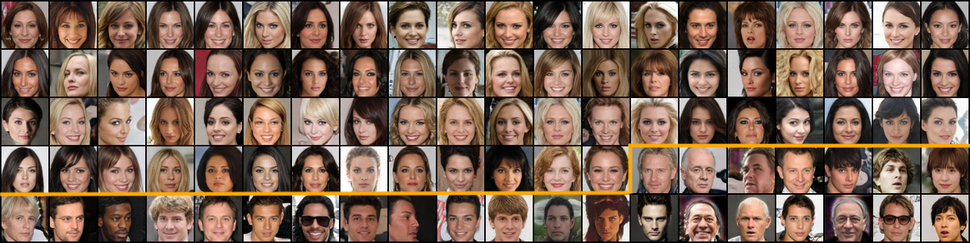}}
\subfigure[\texttt{imp-weight}]{\includegraphics[width=\textwidth]{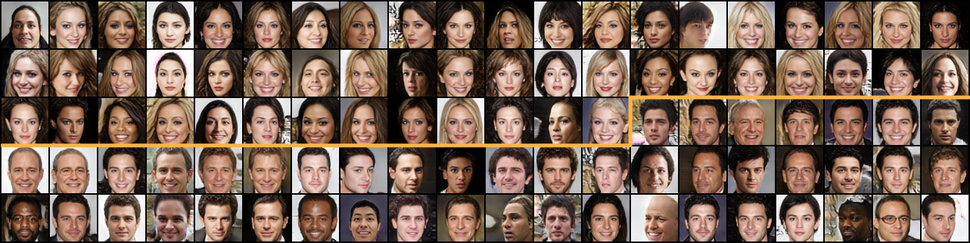}}
\caption{Additional samples of \texttt{bias=0.9}, across different methods. All samples shown are from the scenario where $|\Dunbias| = |\Dbias|$.}
\label{fig:90_10_samples}
\end{figure}

\begin{figure}[h!]
\centering
\subfigure[\texttt{equi-weight}]{\includegraphics[width=\textwidth]{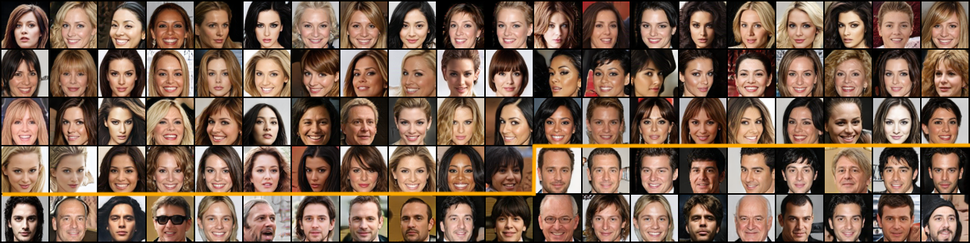}}
\subfigure[\texttt{conditional}]{\includegraphics[width=\textwidth]{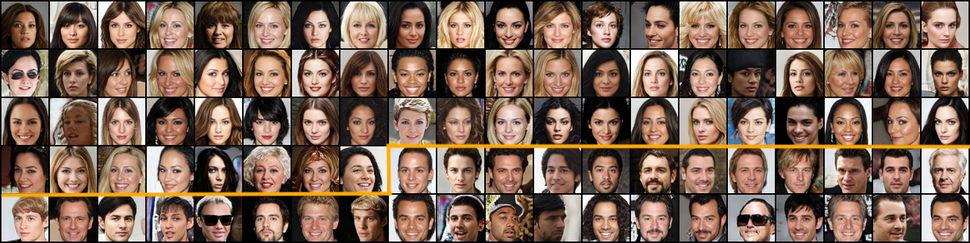}}
\subfigure[\texttt{imp-weight}]{\includegraphics[width=\textwidth]{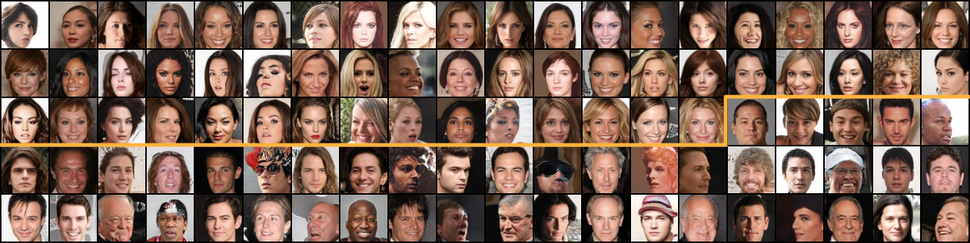}}
\caption{Additional samples of \texttt{bias=0.8}, across different methods. All samples shown are from the scenario where $|\Dunbias| = |\Dbias|$.}
\label{fig:80-20_samples}
\end{figure}

\begin{figure}[h!]
\centering
\subfigure[\texttt{equi-weight}]{\includegraphics[width=\textwidth]{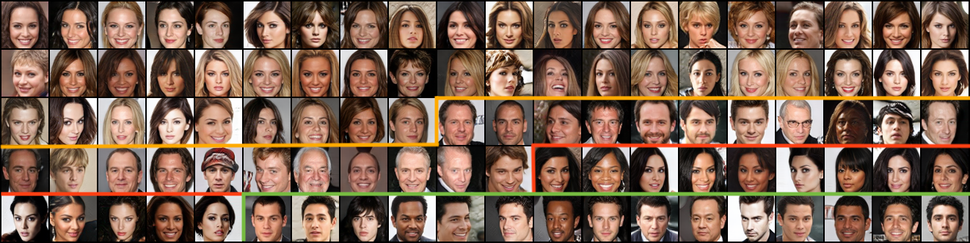}}
\subfigure[\texttt{conditional}]{\includegraphics[width=\textwidth]{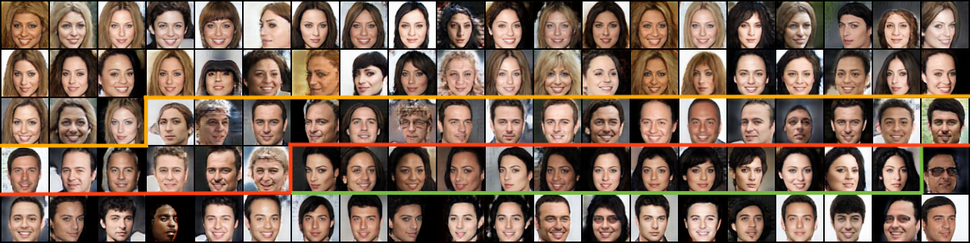}}
\subfigure[\texttt{imp-weight}]{\includegraphics[width=\textwidth]{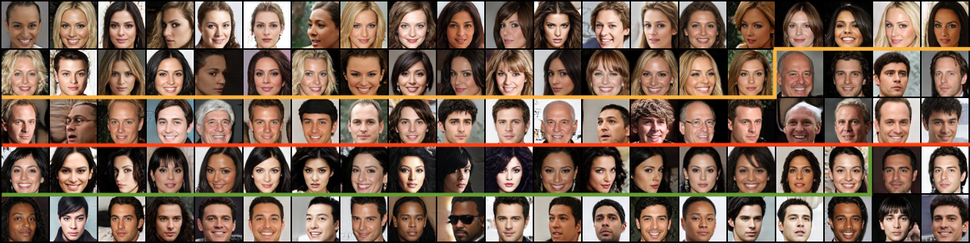}}
\caption{Additional samples of the multi-attribute experiment, across different methods. All samples shown are from the scenario where $|\Dunbias| = |\Dbias|$.}
\label{fig:multi_samples}
\end{figure}

\end{document}